\documentclass{article}

\usepackage{microtype}
\usepackage{graphicx}
\usepackage{subcaption}
\usepackage{booktabs} 

\usepackage{enumitem}

\usepackage{hyperref}

\usepackage[preprint]{neurips_2026}
\usepackage{algorithm}
\usepackage{algorithmic}
\usepackage{caption}

\usepackage{amsmath}
\usepackage{amssymb}
\usepackage{mathtools}
\usepackage{amsthm}
\usepackage{pifont}

\newcommand{\Pa}{\mathrm{Pa}}

\usepackage[capitalize,noabbrev]{cleveref}

\theoremstyle{plain}
\newtheorem{theorem}{Theorem}[section]
\newtheorem{proposition}[theorem]{Proposition}
\newtheorem{lemma}[theorem]{Lemma}
\newtheorem{corollary}[theorem]{Corollary}
\theoremstyle{definition}
\newtheorem{definition}[theorem]{Definition}

\theoremstyle{remark}
\newtheorem{remark}[theorem]{Remark}

\usepackage[textsize=tiny]{todonotes}

\title{Physical Simulators as Do-Operators:\\
Causal Discovery under Latent Confounders for AI-for-Science}

\author{%
  Tsuyoshi Okita \\
  Kyushu Institute of Technology \\
  \texttt{tsuyoshi@ai.kyutech.ac.jp}
}

\begin{document}

\maketitle

\begin{abstract}
Existing interventional causal discovery methods --- IGSP, DCDI, ENCO --- assume causal sufficiency (no latent confounders) and rely on virtual interventions in synthetic simulators.
In AI-for-Science settings such as molecular design and materials science, latent confounders are ubiquitous and real interventions (e.g., physics-based simulations) require hours to days per data point.
We propose \textbf{CFM-SD} (Causal Flow Matching with Simulation Data), which uses first-principles physical simulators as \emph{do-operators} in Pearl's interventional calculus to simultaneously handle latent confounders and real interventional data.
Theoretically, $d$-variable causal structure is identifiable with $O(d)$ single-variable interventions --- the minimum under physical realizability constraints.
In Intrinsic Evaluation on synthetic data ($\gamma=0.2$--$0.8$), CFM-SD achieves average F1$=0.800$ vs.\ F1$=0.127$--$0.562$ for all baselines.
In Extrinsic Evaluation on real scientific data, CFM-SD achieves 57--58\% bias reduction in molecular toxicity prediction and battery electrolyte optimization, demonstrating practical value beyond synthetic benchmarks.
\end{abstract}

\section{Introduction}

Causal discovery with interventional data has made substantial progress in recent years.
Methods such as IGSP \cite{wang2017permutation}, DCDI \cite{brouillard2020differentiable}, and ENCO \cite{lippe2021efficient} achieve strong structural recovery on synthetic benchmarks, while AIT \cite{scherrer2021active} and CBED \cite{tigas2022interventions} further reduce the number of required interventions through active selection.
Yet despite this progress, a fundamental gap remains: \emph{all of these methods assume causal sufficiency} (no latent confounders) and rely on virtual interventions in controlled synthetic environments.
This gap matters enormously in \textbf{AI-for-Science} applications --- molecular design, materials science, drug discovery, climate modeling --- where latent confounders are ubiquitous (experimental conditions simultaneously affect treatment and outcome) and interventions are expensive (DFT: hours to days; wet-lab: weeks). CFM-SD requires only $O(d)$ interventions --- the minimum under physical realizability constraints.

Methods that handle latent confounders with interventional data do exist --- UT-IGSP \cite{squires2020permutation} and JCI \cite{mooij2020joint} --- but they require many interventional environments and have only been evaluated on synthetic data.
Crucially, \emph{no existing method uses first-principles physical simulators as do-operators} in Pearl's sense \cite{pearl2009causality} for causal discovery under latent confounders on real scientific data.
Physical simulators such as DFT \cite{kohn1965self, giannozzi2009quantum} provide samples from $P(Y|\mathrm{do}(X=x))$ by fixing variables from first principles, independent of experimental conditions. The key insight is that \textbf{the gap between $P(Y|X)$ and $P(Y|\mathrm{do}(X))$ is both detectable and causally informative}.

We propose \textbf{CFM-SD} (Causal Flow Matching with Simulation Data), a five-phase algorithm that:
(1) learns $P(Y|X)$ from observational data via Flow Matching,
(2) acquires interventional data via round-robin single-variable do-interventions on the physical simulator,
(3) detects latent confounding via KDE-MMD (kernel density estimation + Maximum Mean Discrepancy \cite{gretton2012kernel}, a nonparametric two-sample test) between $P(Y|X)$ and $P(Y|\mathrm{do}(X))$,
(4) identifies causal directions via Average Treatment Effect (ATE: $\mathbb{E}[Y|\mathrm{do}(X{=}x)] - \mathbb{E}[Y]$, the expected change in $Y$ when $X$ is set to $x$) from interventional data, and
(5) enforces DAG constraints. The main contributions are as follows.
\begin{itemize}[nosep]
\item \textbf{New problem setting:} We formalize causal discovery with physical simulators as do-operators under latent confounders --- a setting not addressed by any existing method.
\item \textbf{Theory:} We show that $d$-variable causal structure is identifiable with $O(d)$ single-variable interventions (Theorem~\ref{thm:intervention_identifiability}), which is the minimum achievable under physical realizability constraints (Theorem~\ref{thm:intervention_lower_bound}).
\item \textbf{Method:} CFM-SD is the first framework to unify Flow Matching (for multimodal conditional density estimation), KDE-MMD-based confounding detection, and ICP-inspired direct-edge identification --- enabling causal discovery under latent confounders with as few as $O(d)$ physical simulator calls.
\item \textbf{Evaluation:} We conduct both Intrinsic Evaluation (causal structure recovery under latent confounders, with comparison against observational methods PC/GES/FCI/LiNGAM and interventional methods IGSP/UT-IGSP) and Extrinsic Evaluation on real scientific data (QSTR, SEI), demonstrating 57--58\% bias reduction over observational methods.
\end{itemize}
  

\section{Related Work}

\textbf{Causal discovery from observational data.}
PC/FCI \cite{spirtes2000causation} output Markov equivalence classes; GES \cite{chickering2002optimal} optimizes BIC scores; LiNGAM \cite{shimizu2006linear} and NOTEARS \cite{zheng2018dags} exploit non-Gaussianity or continuous optimization.
All fail under latent confounders, as the causal graph is non-identifiable from observational data alone \cite{pearl2009causality}.

\textbf{Causal discovery with interventional data.}
IGSP \cite{wang2017permutation}, DCDI \cite{brouillard2020differentiable}, and ENCO \cite{lippe2021efficient} use interventional data but assume causal sufficiency.
AIT \cite{scherrer2021active} and CBED \cite{tigas2022interventions} actively select interventions but assume virtual simulators without latent confounders.
UT-IGSP \cite{squires2020permutation} and JCI \cite{mooij2020joint} handle latent confounders but require many interventional environments or observed context variables.
\citet{kocaoglu2017experimental} achieve $O(\log n + d)$ via multi-variable interventions, but simultaneous multi-variable interventions are physically infeasible in scientific simulation settings (e.g., DFT), making our single-variable $O(d)$ the practically relevant bound.

\textbf{Flow Matching and causal inference.}
Flow Matching \cite{lipman2022flow} learns conditional distributions $P(Y|X)$ without simulation.
Causal normalizing flows \cite{javaloy2023causal} combine flows with structure learning but use observational data only.
DeCaFlow \cite{almodovar2025decaflow} uses flows under hidden confounders but requires the causal graph as input (effect estimation, not discovery).
CFM-SD is the first to use Flow Matching for causal \emph{discovery} under latent confounders with physical simulators as do-operators.

\textbf{Positioning.} Table~\ref{tab:comparison} and Figure~\ref{fig:architecture} summarize the comparison; CFM-SD is the only method with all three: physical simulation as do-operator, latent confounder handling, and real scientific data evaluation.

\begin{figure}[t]
\begin{minipage}[c]{0.44\textwidth}
\vspace{0pt}
\centering
\captionof{table}{Comparison of causal discovery methods with interventional data.}
\label{tab:comparison}
\small
\setlength{\tabcolsep}{3pt}
\renewcommand{\arraystretch}{1.4}
\begin{tabular}{p{3.4cm}ccccc}
\toprule
Method & \rotatebox{90}{Latent conf.} & \rotatebox{90}{Physical sim.} & \rotatebox{90}{Real data} & \rotatebox{90}{Active sel.} & \rotatebox{90}{$O(d)$} \\
\midrule
IGSP \cite{wang2017permutation} & $\times$ & $\times$ & $\times$ & $\times$ & $\bigcirc$ \\
UT-IGSP \cite{squires2020permutation} & $\bigcirc$ & $\times$ & $\times$ & $\times$ & $\times$ \\
JCI \cite{mooij2020joint} & $\bigcirc$ & $\times$ & $\times$ & $\times$ & $\times$ \\
DCDI \cite{brouillard2020differentiable} & $\times$ & $\times$ & $\times$ & $\times$ & $\times$ \\
ENCO \cite{lippe2021efficient} & $\times$ & $\times$ & $\times$ & $\times$ & $\times$ \\
Bicycle \cite{rohbeck2024bicycle} & $\times$ & $\times$ & $\times$ & $\times$ & $\times$ \\
AIT \cite{scherrer2021active} & $\times$ & $\times$ & $\times$ & $\bigcirc$ & $\bigcirc$ \\
CBED \cite{tigas2022interventions} & $\times$ & $\times$ & $\times$ & $\bigcirc$ & $\times$ \\
DeCaFlow \cite{almodovar2025decaflow} & $\bigcirc$ & $\times$ & $\times$ & $\times$ & $\times$ \\
\midrule
\textbf{CFM-SD (ours)} & $\bigcirc$ & $\bigcirc$ & $\bigcirc$ & $\times$ & $\bigcirc$ \\
\bottomrule
\end{tabular}
\end{minipage}
\hfill
\begin{minipage}[c]{0.54\textwidth}
\vspace{0pt}
\centering
\includegraphics[width=\textwidth]{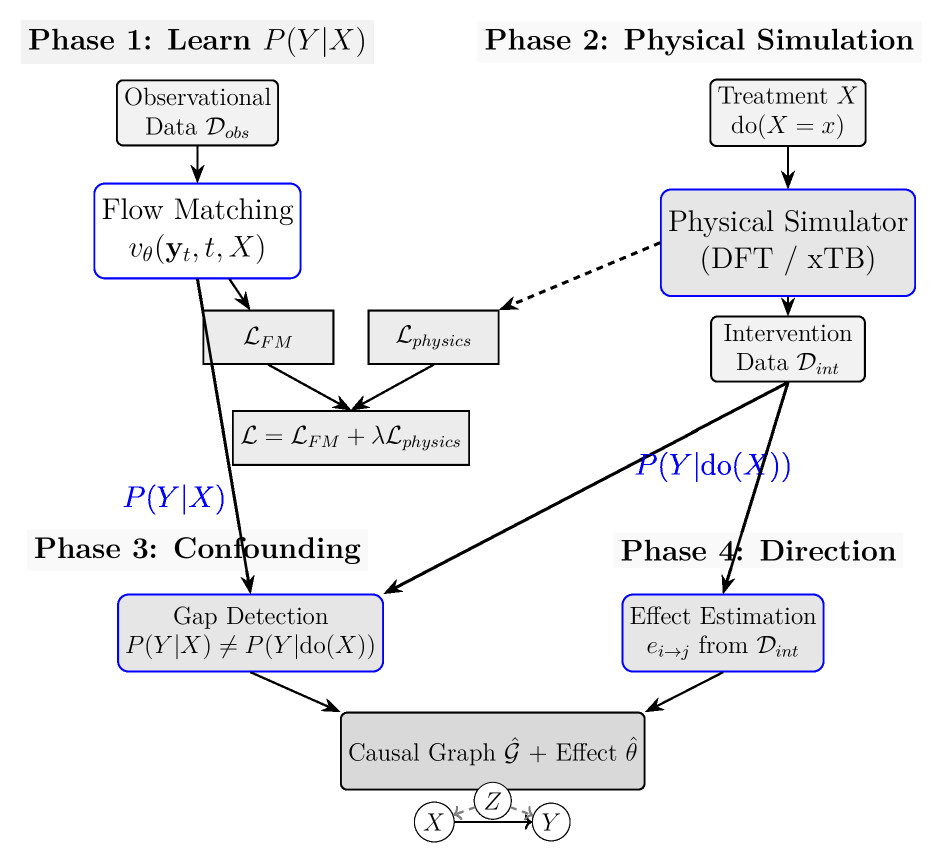}
\captionof{figure}{Overall architecture of CFM-SD. Phase 1: learn $P(X_j|X_i)$ via Flow Matching. Phase 2: acquire interventional data via round-robin do-interventions. Phase 3: detect confounded pairs $\hat{\mathcal{C}}$ via KDE-MMD. Phase 4: identify causal directions via ATE, with stricter thresholds for pairs in $\hat{\mathcal{C}}$. Phase 4b: ICP-style filter to recover direct edges from total effects. Phase 5: DAG constraint enforcement.}
\label{fig:architecture}
\end{minipage}
\end{figure}

\section{Problem Setting}


We consider $d$ observed variables $\mathbf{X} = (X_1, \ldots, X_d)$.
The true causal structure is represented by an augmented DAG $\mathcal{G}^*$ over $(\mathbf{X}, \mathbf{Z})$, where $\mathbf{Z}$ are latent confounders and $A^*_{ij}=1$ indicates $X_i \to X_j$ is a direct causal edge.
\textbf{Observational data} $\mathcal{D}_{obs}$ consists of i.i.d.\ samples from $P(\mathbf{X})$.
\textbf{Interventional data} $\mathcal{D}_{int}^{(j)}$ consists of samples from $P(\mathbf{X}_{-j}|\mathrm{do}(X_j{=}x_j))$, provided by physical simulations such as DFT.
With latent confounders, $\mathcal{M}_1: X \to Y$ and $\mathcal{M}_2: X \leftarrow Z \to Y$ generate identical $P(X,Y)$ \cite{pearl2009causality,peters2017elements}; observational methods cannot distinguish them. CFM-SD overcomes this via interventional data from physical simulators.


\section{Proposed Method: CFM-SD}
\label{sec:method}

We make the following assumptions.
\begin{enumerate}[nosep]
\item[\textbf{(A1)}] \textbf{Causal Markov Condition}:
In the augmented DAG $\mathcal{G}^*$ over $(\mathbf{X}, \mathbf{Z})$, each variable $V$ (observed or latent), given its parent variables $\Pa(V)$ in $\mathcal{G}^*$, is conditionally independent of its non-descendants.
This implies that the observed variables satisfy the Markov condition \emph{with respect to the augmented graph}, not with respect to any DAG restricted to observed variables alone.

\item[\textbf{(A2)}] \textbf{DAG Structure}:
The true causal structure $\mathcal{G}^*$ is a directed acyclic graph (DAG).

\item[\textbf{(A3)}] \textbf{Independent Causal Mechanisms (ICM)}:
The causal mechanism $P(X_i | \Pa(X_i))$ of each variable can change independently of the causal mechanisms of other variables \cite{scholkopf2021causal}. Furthermore, each structural equation $X_j = f(X_i, \varepsilon_j)$ is \emph{non-trivial} in its causal parents: if $X_i \in \Pa(X_j)$, then $f$ is not constant in $X_i$, i.e., $\mathbb{E}[f(x,\varepsilon_j)]$ varies with $x$.

\item[\textbf{(A4)}] \textbf{Perfect Intervention}:
Intervention by the simulator $\mathrm{do}(X_i = x)$ is a hard intervention that fixes the value of $X_i$ to $x$ and blocks all causal inputs to $X_i$.

\item[\textbf{(A5)}] \textbf{Simulator Validity}:
The physical simulator $\mathcal{S}$ accurately reproduces the true causal mechanisms.
That is, samples from $\mathcal{S}.\text{intervene}(i, x)$ follow the true interventional distribution $P(\mathbf{X}_{-i}|\mathrm{do}(X_i = x))$.

\begin{remark}[Three perspectives on Assumption A5]
\textbf{Peters/Schölkopf (ICM):}
DFT fixes $X_i$ from first principles, leaving all other mechanisms $P(X_j|\mathrm{Pa}(X_j))$, $j\neq i$, unchanged --- precisely the Independent Causal Mechanisms principle \cite{scholkopf2021causal}.
Each intervention shifts exactly one mechanism (\emph{Sparse Mechanism Shift} \cite{scholkopf2021causal}), enabling $O(d)$ identifiability.
Note that the latent confounder $Z$ with strength $\gamma>0$ in our experiments does \emph{not} violate SMS: $Z$ acts as a shared input to multiple mechanisms but does not itself constitute a mechanism shift; each do-intervention still modifies exactly one structural equation.
Robustness to approximation error is quantified in Appendix~\ref{thm:simulator_robustness}.

\textbf{Bareinboim (do-calculus):}
DFT constitutes a \emph{hard} intervention because it solves the Schrödinger equation in isolation, structurally blocking all incoming edges to $X_i$ (temperature, solvent, electrode state).
For settings where soft-intervention models are more appropriate, z-identifiability \cite{bareinboim2015recovering} provides an alternative framework: the DFT intervention can be modeled as a z-intervention with context variable $z=\mathrm{DFT}$, and identifiability follows from the z-ID algorithm when the z-interventional distribution is available.
\end{remark}

\item[\textbf{(A6)}] \textbf{Regularity Conditions}:
All conditional densities $p(X_j|X_i)$ and $p(X_j|\mathrm{do}(X_i))$ are continuous and bounded.
\end{enumerate}

Assumption (A5) is relaxed in Appendix~\ref{thm:simulator_robustness}: correct structure is recovered as long as $\epsilon_{\mathrm{sim}} < \Delta_{\min}/2$.

\subsection{Non-identifiability with Observational Data Alone}

\begin{definition}[Markov Equivalence]
Two DAGs $\mathcal{G}_1$ and $\mathcal{G}_2$ are \textbf{Markov equivalent} if they imply the same set of conditional independence relationships.
The set of Markov equivalent DAGs is called the \textbf{Markov equivalence class} (MEC).
\end{definition}

\begin{definition}[Observational Equivalence]
Two causal models $\mathcal{M}_1$ and $\mathcal{M}_2$ are \textbf{observationally equivalent} if they generate the same joint distribution for any set of observed variables $\mathbf{X}$:
$P_{\mathcal{M}_1}(\mathbf{X}) = P_{\mathcal{M}_2}(\mathbf{X})$.
\end{definition}

\begin{theorem}[Non-identifiability with Observational Data \cite{pearl2009causality,peters2017elements}]
\label{thm:obs_nonidentifiable}
Consider $d \geq 2$ observed variables $\mathbf{X} = (X_1, \ldots, X_d)$.
When latent confounders $\mathbf{Z}$ may exist, the following holds:

(i) There exist models that are observationally equivalent but have different causal structures.

(ii) Specifically, for any DAG $\mathcal{G}$, there exists a model that is observationally equivalent to $\mathcal{G}$ and in which all edges are explained by latent confounding.
\end{theorem}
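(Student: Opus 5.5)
The proof is a direct construction. For part (i) we take $d=2$ and the two models already mentioned in the Problem Setting, and for part (ii) we generalize that construction to an arbitrary DAG $\mathcal{G}$ over $\mathbf{X}$.

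\textbf{Part (i).} Let $\mathcal{M}_1$ be the structural model $X_1 := \varepsilon_1$, $X_2 := f(X_1,\varepsilon_2)$, with joint law $P(X_1,X_2)$ and non-trivial $f$ as in (A3). Let $\mathcal{M}_2$ have a single latent $Z$ that carries the whole pair, $Z \sim P(X_1,X_2)$ taking values in the product space, with $X_1 := \pi_1(Z)$ and $X_2 := \pi_2(Z)$. The augmented graph of $\mathcal{M}_2$ is $X_1 \leftarrow Z \to X_2$, with no edge between $X_1$ and $X_2$, and its observational law is exactly $P(X_1,X_2)$. So the two models are observationally equivalent, but $A^*_{12}=1$ in $\mathcal{M}_1$ and $A^*_{12}=0$ in $\mathcal{M}_2$. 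Extra observed variables can be added as independent noise in both models, which gives the claim for every $d\ge 2$.

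\textbf{Part (ii).} Fix any DAG $\mathcal{G}$ over $\mathbf{X}$ and any model $\mathcal{M}$ compatible with it, with joint law $P_{\mathcal{M}}(\mathbf{X})$. Two constructions are available.

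\emph{Single latent.} Take one latent vector $Z\sim P_{\mathcal{M}}(\mathbf{X})$ and set $X_i := Z_i$ for each $i$. This is trivially observationally equivalent to $\mathcal{M}$, and its augmented DAG has no observed-to-observed edges.

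\emph{Pairwise latents.} To stay closer to ``all edges explained by latent confounding,'' introduce one latent $Z_{ij}$ for each edge $X_i\to X_j$ of $\mathcal{G}$ and write every observed variable as a function of the latents attached to it plus independent noise. Existence of such a representation reproducing $P_{\mathcal{M}}$ is the nontrivial part, so the single-latent construction is the safe one. It already satisfies the statement, since every dependence that $\mathcal{G}$ encodes by an edge is generated by confounding through $Z$.

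I will present the single-latent version and remark that (A1)--(A2) hold in the augmented graph: $Z$ is the unique parent of each $X_i$, and the graph is acyclic.

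\textbf{Main obstacle.} The delicate step is interpretive: what ``all edges are explained by latent confounding'' requires. If it requires a one-to-one correspondence between edges of $\mathcal{G}$ and bidirected confounding arcs, the pairwise construction is needed. For general distributions this can fail; an example is a collider $X_1\to X_3\leftarrow X_2$ with a synergistic mechanism, where pairwise latents alone may not reproduce the law. I would handle this by allowing latents with arbitrary cardinality, and a single shared latent across all variables, which the definition of $\mathbf{Z}$ permits. This makes the claim follow from the measurable representation $X=\pi(Z)$ without further conditions.
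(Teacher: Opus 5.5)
Your proof is correct and is the rigorous version of the paper's own argument, which consists only of the two-variable example $X\to Y$ versus $X\leftarrow Z\to Y$; the appendix it points to actually proves the interventional identifiability theorem, not this one. Taking $Z\sim P(X_1,X_2)$ with coordinate projections is what makes ``identical $P(X,Y)$'' hold, and the single-latent copy $Z\sim P_{\mathcal{M}}(\mathbf{X})$ supplies the argument for (ii) that the paper leaves implicit.

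One correction to your side remark. The collider is not an obstruction to the one-latent-per-edge reading: independent copies $Z_{13}\sim P(X_1)$ and $Z_{23}\sim P(X_2)$, with $X_1:=Z_{13}$, $X_2:=Z_{23}$, $X_3:=f_3(Z_{13},Z_{23},\varepsilon_3)$, reproduce any collider law, synergistic or not. A chain $X_1\to X_2\to X_3$ is a genuine obstruction: independent $Z_{12},Z_{23}$ force $X_1\perp X_3$ marginally, which a chain with non-trivial mechanisms generally violates. So the strict one-latent-per-edge reading of (ii) is false in general, and your single-latent reading is the right one.
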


\begin{proof}[Proof Sketch]
$\mathcal{M}_1: X \to Y$ and $\mathcal{M}_2: X \leftarrow Z \to Y$ generate identical $P(X,Y)$. See Appendix~\ref{sec:proof_identifiability}.
\end{proof}

\begin{remark}[Role of Theorem~\ref{thm:obs_nonidentifiable}]
This theorem is a standard result \cite{pearl2009causality}; we include it explicitly to establish the \emph{necessity} of interventional data in our setting. Its role here is to ground Theorems~\ref{thm:intervention_identifiability}--\ref{thm:intervention_lower_bound} in a formal baseline: without interventional data, recovery of causal structure under latent confounders is impossible, regardless of sample size or model class. This motivates the use of physical simulators as do-operators.
\end{remark}

\subsection{Identifiability through Interventional Data}

\begin{lemma}[Separation of Confounding through Intervention]
\label{lem:intervention_separation}
Under Assumptions (A1)--(A5), when an intervention $\mathrm{do}(X_i = x)$ is performed on variable $X_i$, the dependence between $X_i$ and the latent confounder $Z$ is blocked: $P(Z | \mathrm{do}(X_i = x)) = P(Z)$. This follows from Rule 3 of do-calculus \cite{pearl2009causality}.
\end{lemma}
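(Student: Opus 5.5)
The plan is to read $P(Z\mid \mathrm{do}(X_i=x))$ as the marginal of $Z$ in the intervened model $\mathcal{M}_{x}$. By (A4), $\mathcal{M}_x$ replaces the structural equation of $X_i$ with the constant $X_i:=x$ and leaves every other mechanism unchanged, so its graph is $\mathcal{G}^*_{\overline{X_i}}$: the augmented DAG with all edges into $X_i$ removed. The claim then reduces to checking that $Z$ and $X_i$ are $d$-separated in $\mathcal{G}^*_{\overline{X_i}}$. Rule 3 of do-calculus, in the form $P(z\mid \mathrm{do}(x_i)) = P(z)$ whenever $(Z \perp X_i)_{\mathcal{G}^*_{\overline{X_i}}}$, then gives the result directly, since the conditioning set is empty.

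The key steps are as follows. (1) Using (A1) and (A2), write the joint over $(\mathbf{X},\mathbf{Z})$ as a product of mechanisms along $\mathcal{G}^*$. Apply the truncated factorization under (A4) to obtain
\[
P(\mathbf{x}_{-i},\mathbf{z}\mid \mathrm{do}(X_i=x)) \;=\; \prod_{V\neq X_i} P(v\mid \Pa(V))\big|_{X_i=x}.
\]
(2) Marginalize out all observed variables in reverse topological order. Every observed descendant of $X_i$ integrates to one, because its mechanism is a normalized conditional density. What remains is $\prod_{Z_k} P(z_k\mid \Pa(Z_k))$ together with the factors for non-descendants of $X_i$. (3) Show that the $Z$-factors never involve $x$. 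The substantive content is therefore the structural premise that the latent confounder $Z$ is a non-descendant of $X_i$, i.e., that $Z$ is exogenous, or at least not downstream of $X_i$. This matches the paper's setting, where confounders such as experimental conditions affect treatment and outcome but are not affected by them. Under this premise, the ancestral subgraph of $Z$ excludes $X_i$, so integrating out the other non-descendants leaves exactly the pre-intervention marginal $P(\mathbf{z})$. Equivalently, in $\mathcal{G}^*_{\overline{X_i}}$ every path from $X_i$ to $Z$ must leave $X_i$ through an outgoing edge. Any such path reaching a non-descendant $Z$ must contain a collider, so with an empty conditioning set it is blocked. This gives the $d$-separation that Rule 3 needs.

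The main obstacle is step (3): the lemma as stated is false if some $Z$ is a descendant of $X_i$ (for example $X_i \to Z \to X_j$), so the proof has to make the non-descendant premise explicit. I would state it as part of the meaning of ``latent confounder'' in $\mathcal{G}^*$, namely $Z$ is a common cause of observed variables and has no observed ancestors. With that premise, the collider argument is routine. I would also note that (A5) is only needed to transfer the identity to samples actually produced by the simulator $\mathcal{S}$, and that (A3) plays no role beyond ensuring that the other mechanisms are unchanged by the intervention.
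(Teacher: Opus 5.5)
Your proposal is correct and follows the same route as the paper's sketch: truncated factorization under the hard intervention (A4), or equivalently Rule 3 via $d$-separation of $Z$ and $X_i$ in $\mathcal{G}^*_{\overline{X_i}}$, with the marginalization and collider argument spelled out. You are right that the lemma needs $Z$ to be a non-descendant of $X_i$, which the paper leaves implicit; but in the paper's uses $Z$ is a parent of $X_i$ (the backdoor $X_i \leftarrow Z \to X_j$), so this already follows from acyclicity (A2), and your stronger ``no observed ancestors'' condition is not needed.
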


\begin{proof}[Proof Sketch]
Truncated factorization \cite{pearl2009causality}: intervention blocks $Z \to X_i$.
\end{proof}

\begin{theorem}[Identifiability through Intervention]
\label{thm:intervention_identifiability}
Under Assumptions (A1)--(A6), the following holds (cf. \cite{peters2017elements}, Chapter 7) :
(i) For a variable pair $(X_i, X_j)$, if interventional data for both $X_i$ and $X_j$ are available, we can distinguish between $X_i \to X_j$, $X_j \to X_i$, and ``no causal relationship.''

(ii) If interventional data for all variables are available, the true causal structure $\mathcal{G}^*$ can be uniquely identified.
\end{theorem}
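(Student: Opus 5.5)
The plan is to reduce both parts to one primitive: testing whether a single intervention shifts another variable's distribution. For an ordered pair $(X_i,X_j)$ define the interventional contrast $\Delta_{i\to j}(x,x') = \mathbb{E}[X_j\mid \mathrm{do}(X_i{=}x)] - \mathbb{E}[X_j\mid \mathrm{do}(X_i{=}x')]$, or more generally the MMD between $P(X_j\mid\mathrm{do}(X_i{=}x))$ and $P(X_j\mid\mathrm{do}(X_i{=}x'))$. By (A5) the simulator gives these distributions exactly, and (A6) makes them well defined. By (A4) and the truncated factorization, the intervened model is the augmented DAG $\mathcal{G}^*$ with all edges into $X_i$ removed. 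Lemma~\ref{lem:intervention_separation} then says $Z$ keeps its marginal law. Consequently every back-door path $X_i \leftarrow \cdots \to X_j$, including those through latent $Z$, is cut. The only remaining source of dependence of $X_j$ on the set value $x$ is a directed path $X_i \to\cdots\to X_j$. By (A1), if no such path exists then $P(X_j\mid\mathrm{do}(X_i{=}x))$ does not depend on $x$, so $\Delta_{i\to j}\equiv 0$.

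For part (i) I would argue the converse using (A3). If $X_i\to X_j$ is an edge, the non-triviality clause says $\mathbb{E}[f(x,\varepsilon_j)]$ varies with $x$, so $\Delta_{i\to j}\not\equiv 0$. By (A2) at most one of $X_i\to X_j$ and $X_j\to X_i$ holds, and by the previous paragraph only an edge (or path) pointing from the intervened variable yields a nonzero contrast. The case analysis is then direct. If $\Delta_{i\to j}\neq 0$ and $\Delta_{j\to i}\equiv 0$, we have $X_i\to X_j$; symmetrically for $X_j\to X_i$. If both vanish, there is no causal relationship, and any observational dependence must come from $Z$, which is exactly the $\mathcal{M}_2$ case of Theorem~\ref{thm:obs_nonidentifiable}. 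Both contrasts cannot be nonzero, since that would require a directed cycle. I would state part (i) as distinguishing causal precedence: a direct edge when $X_i,X_j$ are adjacent, an ancestral relation otherwise.

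For part (ii), running the test over all $d$ single-variable interventions and all pairs recovers the ancestral relation $\mathrm{An}(\mathcal{G}^*)$ restricted to $\mathbf{X}$, i.e.\ the transitive closure of the direct-edge graph. The remaining step is passing from total to direct effects. My plan is an ICP-style criterion. Let $M_{ij}=\mathrm{De}(X_i)\cap\mathrm{An}(X_j)\setminus\{X_i,X_j\}$, which is known from the recovered ancestral relation. Declare $X_i\to X_j$ a direct edge iff $X_j$ still depends on $x$ under $\mathrm{do}(X_i{=}x)$ after accounting for $M_{ij}$.

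\textbf{The main obstacle} is this accounting step. The obvious version conditions on $M_{ij}$ in the interventional distribution, but that can open colliders $X_i\to M\leftarrow Z\to X_j$ through latent confounders. Worse, the total effect along several paths could cancel, which (A3) as stated does not exclude. I would first strengthen the reading of (A3) to a faithfulness-type condition that non-zero path effects do not cancel. I would then avoid conditioning altogether and use joint information from the other available interventions. Comparing $\mathrm{do}(X_i)$ with $\mathrm{do}(X_m)$ for $m\in M_{ij}$ lets one check whether the effect of $X_i$ on $X_j$ is fully mediated, i.e.\ factorizes through the mediators' interventional responses. If this cannot be made rigorous in full generality, I would fall back on identifying the transitive reduction of the recovered ancestral graph. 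That reduction equals $\mathcal{G}^*$ restricted to $\mathbf{X}$ whenever the graph has no shortcut edges, and I would state this restriction explicitly.
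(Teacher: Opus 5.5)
Your part (i) is the paper's argument, made tighter. The appendix proof runs the same cases: an edge $X_i\to X_j$ shifts $X_j$ through its structural equation, while a reverse edge or pure confounding does not (the latter via Lemma~\ref{lem:intervention_separation}). Your truncated-factorization step covers every non-ancestor, not only those configurations. The paper's part (ii) is the single line ``apply (i) to all pairs''. That line silently equates ``$X_j$ moves under $\mathrm{do}(X_i)$'' with ``$X_i\to X_j$ is an edge''. You are right that this gives only the ancestral relation: the paper's rule outputs $X_0\to X_2$ for the chain $X_0\to X_1\to X_2$, as its own remark on the ATE estimator concedes.

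But your plan does not close that step either, so measured against (ii) there is a genuine gap, the same one the paper has. Your mediator-factorization test fails whenever a mediator and the target share a latent confounder, because total effects then do not compose. Take $X_2=X_1(1+Z)+\varepsilon_2$ and $X_3=X_2(1+Z)+\varepsilon_3$ with $\mathbb{E}Z=0$, $\mathrm{Var}\,Z=1$, and no edge $X_1\to X_3$. Then $\mathbb{E}[X_2\mid\mathrm{do}(X_1{=}x)]=x$ and $\mathbb{E}[X_3\mid\mathrm{do}(X_2{=}y)]=y$. However, $\mathbb{E}[X_3\mid\mathrm{do}(X_1{=}x)]=x\,\mathbb{E}[(1+Z)^2]=2x$, so your check reports a spurious direct edge. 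Testing $X_1\perp X_3$ in the $\mathrm{do}(X_2)$ regime repairs this example, but fails once $X_1$ and $X_3$ also share a latent. Under latent confounding, direct and mediated effects are normally separated by intervening on $X_i$ jointly with the intermediate variables (cf.\ \cite{kocaoglu2017experimental}), which the single-variable setting excludes. Your fallback, transitive reduction under a no-shortcut condition, proves a strictly weaker theorem than (ii).

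Two further remarks. First, the faithfulness strengthening you reserve for (ii) is already needed in (i), and the paper's Case 1 has the same hole, because (A3) constrains single structural equations, not total effects. With $X_k=X_i+\varepsilon_k$ and $X_j=X_i-X_k+\varepsilon_j$, the contrast $\Delta_{i\to j}$ vanishes although $X_i\to X_j$ is an edge. Even a single path can wash out. Take $X_2=X_1+U+\eta$ ($U\sim\mathrm{Unif}[0,1]$, $\eta$ Gaussian) and $X_3=\sin(2\pi X_2)+\varepsilon_3$. Both equations satisfy (A3) and (A6), yet the law of $X_3$ under $\mathrm{do}(X_1{=}x)$ is the same for all $x$. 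So your ancestral reading of (i) needs an explicit assumption: $X_i\in\mathrm{An}(X_j)$ implies $P(X_j\mid\mathrm{do}(X_i{=}x))$ varies with $x$.

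Second, on the positive side, for the linear SCMs of the paper's main benchmark your mediator idea becomes an exact proof of (ii), with no faithfulness needed. Latent terms enter additively and do not depend on $x$. Hence the slope of $x\mapsto\mathbb{E}[X_j\mid\mathrm{do}(X_i{=}x)]$ is the $(i,j)$ entry of $T=(I-B)^{-1}$, where $B_{ij}$ is the coefficient of $X_i$ in the equation for $X_j$. The $d$ intervention sets with $K\ge 2$ values identify $T$, and $B=I-T^{-1}$ then recovers every edge, including the cancelling one above. Beyond the linear case, the defensible versions of (ii) are recovery of the ancestral graph, or of the observed-variable graph under your no-shortcut condition.
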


\begin{proof}[Proof Sketch]
$P(X_j|\mathrm{do}(X_i{=}x))$ varies with $x$ iff $X_i \to X_j$ (structural equation). By Lemma~\ref{lem:intervention_separation}, $P(Z|\mathrm{do}(X_i=x))=P(Z)$, so confounding cannot cause variation in $P(X_j|\mathrm{do}(X_i=x))$; only a genuine causal edge $X_i\to X_j$ can. See Appendix~\ref{sec:proof_identifiability}.
\end{proof}

\noindent\textbf{Total Causal Effect (ATE estimator):}
$e_{i \to j} = \mathrm{ATE}(i{\to}j) = \mathbb{E}_{\mathcal{D}_{int}^{(i)}}[X_j] - \mathbb{E}_{\mathcal{D}_{obs}}[X_j]$,
where $\mathcal{D}_{int}^{(i)}$ aggregates interventions at multiple values $\{x^{(k)}_i\}$.
\textbf{Remark:} This estimates the \emph{total} causal effect of $X_i$ on $X_j$ (including indirect paths), not the direct effect. In acyclic graphs, $|e_{i\to j}| > |e_{j\to i}|$ still correctly identifies the \emph{direction} of causal influence, because intervention on a non-ancestor has zero total effect (Theorem~\ref{thm:direction}). However, total-effect-based edges may include indirect connections (e.g., in a Chain $X_0\to X_1\to X_2$, $|e_{0\to 2}|>0$ due to propagation). Phase 4b resolves this by applying an ICP-inspired direct-edge filter \cite{waldorp2025perturbation}: if $X_i\to X_k$ is established and $|\mathrm{ATE}(X_k\to X_j)|>\tau$, then $X_i\to X_j$ is classified as indirect and removed.

\begin{theorem}[Identifiability of Causal Direction]
\label{thm:direction}
When the true causal relationship is $X_i \to X_j$, under sufficient interventional data, $|e_{i \to j}| > |e_{j \to i}|$ holds.
\end{theorem}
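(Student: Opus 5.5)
The plan is to show two things separately: the reverse effect $e_{j\to i}$ vanishes in the population, while the forward effect $e_{i\to j}$ stays bounded away from zero. A concentration step then carries the strict inequality over to the finite-sample estimators.

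\textbf{Step 1: the reverse effect is zero.} By (A2), $X_i \to X_j$ means $X_j$ is not an ancestor of $X_i$ in the augmented DAG $\mathcal{G}^*$. Under $\mathrm{do}(X_j=x)$, the truncated factorization (A1), (A4) removes only the mechanism of $X_j$. The factors for $X_i$, its ancestors and the latent $\mathbf{Z}$ are untouched, and none of them involves $X_j$ or its descendants. Marginalizing the descendants of $X_j$ therefore gives $P(X_i\mid\mathrm{do}(X_j=x)) = P(X_i)$ for every $x$; this is Rule 3 of do-calculus, the same argument as Lemma~\ref{lem:intervention_separation}. By (A5), the simulator samples from exactly this distribution, so $\mathbb{E}_{\mathcal{D}_{int}^{(j)}}[X_i]$ and $\mathbb{E}_{\mathcal{D}_{obs}}[X_i]$ have the same population value $\mathbb{E}[X_i]$, and hence $e_{j\to i}\to 0$.

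\textbf{Step 2: the forward effect is nonzero.} Write $m(x)=\mathbb{E}[X_j\mid\mathrm{do}(X_i=x)]$. Under the intervention, $\mathbf{Z}$ keeps its marginal law (Lemma~\ref{lem:intervention_separation}), so $m(x)$ integrates the structural equation of $X_j$ against the undisturbed distribution of the other parents and the noise. By the non-triviality clause of (A3), $m$ is not constant in $x$. The population value of $e_{i\to j}$ is $\frac1K\sum_k m(x_i^{(k)}) - \mathbb{E}[X_j]$, where $x_i^{(k)}$ are the intervention values. I would make this nonzero with a mild design requirement, which is what ``sufficient interventional data'' should mean: the values $x_i^{(k)}$ are chosen so that the averaged interventional mean differs from $\mathbb{E}[X_j]$. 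For example, use a single value $x^\star$ with $m(x^\star)\neq \mathbb{E}[X_j]$, which exists because $m$ is nonconstant. A cleaner alternative is to define the effect as a spread $\max_k m(x^{(k)})-\min_k m(x^{(k)})$. Either way the forward gap is $\Delta:=|e_{i\to j}|>0$ in the population.

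\textbf{Step 3: finite samples.} Using the boundedness in (A6) (or finite variances), apply Hoeffding/Chebyshev to the two empirical means in each estimator. Once the sample sizes are large enough that each estimation error is below $\Delta/4$ with high probability, we get $|\hat e_{i\to j}|>3\Delta/4>\Delta/4>|\hat e_{j\to i}|$. This yields the claim with probability tending to one.

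\textbf{Main obstacle.} The hard step is Step 2. Non-triviality of $f$ in $X_i$ does not by itself rule out the case where averaging over the intervention values exactly reproduces the observational mean. It also does not rule out cancellation between a direct path and an indirect path, which is a faithfulness-type failure. I would handle this by reading ``sufficient interventional data'' as including intervention values at which $m(x)\neq\mathbb{E}[X_j]$, and by stating that explicitly. Non-constancy of $m$ guarantees such values exist, and a generic, randomly drawn design hits them almost surely.
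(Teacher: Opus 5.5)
Your proof follows the paper's argument. The reverse effect is zero by truncated factorization because $X_j$ is not an ancestor of $X_i$, and the forward effect is nonzero by the non-triviality clause of (A3). Your version is also more careful: the paper jumps from ``$\mathbb{E}[f(x,\varepsilon_j)]$ varies with $x$'' straight to $e_{i\to j}\neq 0$ and stays at the population level. Your averaging concern is real: in any linear SCM whose equally weighted intervention values have mean $\mathbb{E}[X_i]$ (e.g.\ percentiles symmetric about the median of a Gaussian $X_i$), the population $e_{i\to j}$ is exactly $0$. So your design condition and concentration step genuinely repair the paper's argument.

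What no design can fix is path cancellation. Parents of $X_j$ that descend from $X_i$ do not keep an undisturbed law under $\mathrm{do}(X_i)$, so your Step 2 appeal to an ``undisturbed distribution of the other parents'' does not by itself give non-constancy of $m$. That non-constancy is an additional faithfulness-type assumption, one the paper also makes tacitly via (A3). You should state it explicitly rather than fold it into ``sufficient interventional data.''
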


\begin{proof}[Proof Sketch]
If $X_i \to X_j$, then $\mathrm{do}(X_i=x)$ changes the distribution of $X_j$ via the structural equation, so $\mathbb{E}[X_j|\mathrm{do}(X_i=x)] \neq \mathbb{E}[X_j]$ for at least one $x$, giving $e_{i\to j} \neq 0$. Conversely, if $X_j$ is not an ancestor of $X_i$, then $\mathrm{do}(X_j=y)$ does not affect $X_i$ (truncated factorization), so $e_{j\to i} = \mathbb{E}[X_i|\mathrm{do}(X_j=y)] - \mathbb{E}[X_i] = 0$. Hence $|e_{i\to j}| > |e_{j\to i}|$. See Appendix~\ref{sec:proof_direction}.
\end{proof}

\subsection{Lower Bound on Required Number of Interventions}

\begin{theorem}[Lower Bound on Required Number of Interventions]
\label{thm:intervention_lower_bound}
To identify the causal structure consisting of $d$ variables, at least $d-1$ \emph{intervention sets} are required in the worst case, where an intervention set on $X_i$ consists of $K \geq 2$ interventions at distinct values $\{x^{(1)}_i, \ldots, x^{(K)}_i\}$ (a single fixed-value intervention cannot reveal causal influence).
\end{theorem}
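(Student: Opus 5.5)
The plan is an adversary (indistinguishability) argument, which is the standard route to lower bounds of the form ``at least $d-1$ intervention sets in the worst case'' (cf.\ the single-variable-intervention bounds of Eberhardt et al.). Fix any intervention strategy that uses at most $d-2$ intervention sets, each on a single variable. Then at least two variables, say $X_a$ and $X_b$, are never intervened on. We will construct two causal models $\mathcal{M}_1,\mathcal{M}_2$ over $(\mathbf{X},\mathbf{Z})$ that satisfy (A1)--(A6), have different observed causal structures, and yet induce identical observational distributions and identical interventional distributions $P(\mathbf{X}_{-i}\mid \mathrm{do}(X_i=x))$ for every intervened $X_i$ and every value $x$. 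No procedure can then distinguish them, so identification fails. Since the strategy may be adaptive, the adversary chooses $\mathcal{M}_1,\mathcal{M}_2$ after seeing which variables are left untouched. This is legitimate because the claim is worst-case over structures.

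For the construction, take $\mathcal{M}_1$ with the single edge $X_a\to X_b$, e.g.\ $X_b=\beta X_a+\varepsilon_b$. Take $\mathcal{M}_2$ with no edge between them but a latent $Z$ that reproduces the same joint law, via $X_a\leftarrow Z\to X_b$, exactly as in Theorem~\ref{thm:obs_nonidentifiable}. The remaining variables are made independent of $X_a,X_b$ and of each other, or at least attached identically in both models, with no paths running through $X_a$ or $X_b$. To satisfy the non-triviality condition in (A3), use Gaussian linear mechanisms. Choose $Z$ and its coefficients so that $(X_a,X_b)$ has the same bivariate Gaussian law in both models. For instance, set $X_a=Z$ and $X_b=\beta Z+\varepsilon_b$ in $\mathcal{M}_2$, where the edge $Z\to X_a$ is deterministic with unit coefficient; that edge can be given a small noise if needed, with $X_b$'s coefficient adjusted to match the covariance.

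The key verification is that every available experiment agrees. For the observational regime this holds by construction. For $\mathrm{do}(X_i=x)$ with $i\notin\{a,b\}$, use the truncated factorization (as in Lemma~\ref{lem:intervention_separation}). Since $X_i$ is not an ancestor of $X_a$ or $X_b$ in either model, the interventional marginal of $(X_a,X_b)$ equals its observational marginal, which is the same in both models. The remaining factors are shared mechanisms, so the full interventional laws coincide. This holds for all $K$ values $x_i^{(k)}$, so taking $K\ge 2$ does not help.

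The main obstacle is the worst-case quantifier and the restriction to single-variable intervention sets. We must make sure that a cleverer strategy could not extract the $a$--$b$ orientation from interventions elsewhere, for example by intervening on a common child or parent. Keeping $X_a,X_b$ isolated from the rest of the graph in both models handles this cleanly. We also need to check that the parenthetical in the statement, that a single fixed value reveals nothing, is only a cost convention and does not enter the bound. Finally, we remark that $d-1$ is achieved: intervening on all but one variable suffices by Theorem~\ref{thm:intervention_identifiability}. The reason is that edges into the last variable are detected from the other variables' intervention sets, and edges out of it are excluded by acyclicity combined with those same intervention sets. This matches the $O(d)$ upper bound.
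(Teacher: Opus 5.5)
For a fixed (non-adaptive) design such as the paper's round-robin, your argument is correct and follows the paper's route. Pigeonhole leaves two never-intervened variables $X_a,X_b$, and the $X_a\to X_b$ versus $X_a\leftarrow Z\to X_b$ ambiguity of Theorem~\ref{thm:obs_nonidentifiable} survives. Your version is more careful than the paper's proof, which only invokes observational non-identifiability and never checks that interventions on the other $d-2$ variables leave the ambiguity intact. Isolating $X_a,X_b$ and applying the truncated-factorization check to every $\mathrm{do}(X_i=x)$, $i\notin\{a,b\}$, closes that hole.

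\textbf{The gap: adaptive strategies.} The step ``the adversary chooses $\mathcal{M}_1,\mathcal{M}_2$ after seeing which variables are left untouched'' is circular. For an adaptive strategy the untouched set depends on the data, hence on the very model being constructed. A worst-case claim needs a model on which the strategy, run on that same model, fails.

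Your family supplies none. In $\mathcal{M}_1^{(a,b)}$ and $\mathcal{M}_2^{(a,b)}$ the pair $(X_a,X_b)$ is the only observationally dependent pair. So the strategy ``intervene on both endpoints of every dependent pair'' touches $X_a$ and $X_b$ for every choice of $(a,b)$, using two intervention sets, which is within the budget $d-2$ once $d\ge 4$. Its $\mathrm{do}(X_a)$ data separate the two models, since $X_b$ shifts only in $\mathcal{M}_1$.

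The repair has three steps:
\begin{itemize}
\item Fix a reference model before running the strategy.
\item Take an untouched pair from \emph{that} transcript.
\item Build an alternative with a different structure that agrees with the reference on the observational law and on every intervention actually queried.
\end{itemize}
By induction over the steps, the strategy then produces the same transcript and the same output on both models.

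For example, let the reference have all $X_k\sim\mathcal{N}(0,2)$ independent. In the alternative, set $X_a=Z+\varepsilon_a$ and $X_b=\beta X_a-2\beta Z+\varepsilon_b$, with $Z,\varepsilon_a\sim\mathcal{N}(0,1)$, $\varepsilon_b\sim\mathcal{N}(0,2-2\beta^2)$ and $0<|\beta|<1$, leaving everything else unchanged. Then $X_a,X_b$ are independent $\mathcal{N}(0,2)$ variables, exactly as in the reference. Yet $X_a\to X_b$ is an edge with $\mathbb{E}[X_b\mid\mathrm{do}(X_a=x)]=\beta x$, so (A3) holds. The cancellation violates faithfulness, but faithfulness is not among (A1)--(A6).

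\textbf{A false side remark.} Your closing claim that $d-1$ intervention sets suffice is false under latent confounders. Theorem~\ref{thm:intervention_identifiability}(i) cannot support it, because that result requires interventions on both endpoints.

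Your own technique refutes it. Suppose only $X_b$ is untouched, and compare $X_b\to X_a$ with $X_a\leftarrow Z\to X_b$, all else isolated.
\begin{itemize}
\item The observational laws agree.
\item $\mathrm{do}(X_a=x)$ leaves $X_b$ at its marginal in both models.
\item Acyclicity excludes $X_b\to X_a$ only after $X_a$ has been shown to be an ancestor of $X_b$, which never happens here.
\end{itemize}
So with single-variable interventions at least $d$ sets are needed in the worst case, the count used in Proposition~\ref{prop:intervention_upper_bound}. The stated $d-1$ bound is valid but not tight, and this does not affect your lower-bound argument.
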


\begin{proof}[Proof Sketch]
With $d{-}2$ intervention sets, at least two variables $X_a, X_b$ have no interventions; by Thm~\ref{thm:obs_nonidentifiable}, their causal direction is non-identifiable from observational data alone. See Appendix~\ref{sec:proof_lower_bound}.
\end{proof}

\begin{proposition}[Upper Bound on Sufficient Number of Interventions]
\label{prop:intervention_upper_bound}
Under Assumptions (A1)--(A6), with $d$ single-variable interventions (one for each variable), any $d$-variable DAG can be identified.
\end{proposition}

\begin{proof}[Proof Sketch]
$d$ interventions cover all variables; by Thm~\ref{thm:intervention_identifiability}(ii), all edges are identified.
\end{proof}

\begin{corollary}
CFM-SD identifies causal structure with $O(d)$ intervention sets vs.\ $O(2^d)$ environments (ICP) or $O(d^2)$ samples (PC).
\end{corollary}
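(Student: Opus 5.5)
The corollary follows by counting. I would take the upper bound from Proposition~\ref{prop:intervention_upper_bound}, make the cost unit of each phase of CFM-SD explicit, and compare with the environment and sample requirements of ICP and PC.

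\emph{Step 1 (intervention count of CFM-SD).} I would first show that Phase 2 of CFM-SD makes exactly one intervention set per variable. The round-robin schedule performs $\mathrm{do}(X_i=x_i^{(k)})$ for $k=1,\dots,K$ with a fixed $K\ge 2$, as required by Theorem~\ref{thm:intervention_lower_bound}. That gives $d$ intervention sets, or $Kd=O(d)$ simulator calls.

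\emph{Step 2 (sufficiency).} Next I would check that Phases 3--5 extract the true graph from these data. Theorem~\ref{thm:intervention_identifiability}(ii) and Proposition~\ref{prop:intervention_upper_bound} say that one intervention set per variable identifies $\mathcal{G}^*$. Theorem~\ref{thm:direction} says that the ATE comparison $|e_{i\to j}|>|e_{j\to i}|$ orients every ancestral pair. Under (A3), the total-effect graph is the transitive closure of $\mathcal{G}^*$. The Phase 4b filter then removes indirect edges, because an edge $X_i\to X_j$ survives exactly when no established intermediate $X_k$ carries a nonzero total effect to $X_j$. Phase 5 only discards cycles, and none arise in the population limit.

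\emph{Step 3 (comparison).} I would state the comparison as a cost model rather than a new theorem. ICP, in the worst case, needs environments that perturb every subset of candidate parents, which is $O(2^d)$. The $O(d^2)$ figure for PC refers to its pairwise conditional-independence tests. Combining these with Step 1 yields the corollary. By Theorem~\ref{thm:intervention_lower_bound}, the $O(d)$ count is order-optimal among single-variable schemes.

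\emph{Main obstacle.} The hardest part is Step 2's claim that Phase 4b recovers exactly the direct edges. The rule ``remove $X_i\to X_j$ if some established $X_k$ has $|\mathrm{ATE}(X_k\to X_j)|>\tau$'' can wrongly delete a genuine direct edge when $X_i\to X_j$ and $X_i\to X_k\to X_j$ both exist. I would first handle this by restricting the corollary to identification in the population sense given by Proposition~\ref{prop:intervention_upper_bound}, so that the count does not depend on the filter. As an alternative, I would note that testing whether $P(X_j\mid\mathrm{do}(X_i=x))$ still varies with $x$ after conditioning on the intervened intermediates uses the same intervention sets. Either way the intervention count stays $O(d)$.
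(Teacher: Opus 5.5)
Your main line is exactly how the paper gets the corollary, which it states with no separate proof. That line is: one intervention set per variable from the round-robin Phase~2 ($Kd$ simulator calls), sufficiency from Proposition~\ref{prop:intervention_upper_bound}, order-optimality from Theorem~\ref{thm:intervention_lower_bound}, and the ICP/PC figures read as an informal cost comparison. So your fallback is the paper's argument and is acceptable as such. The problems are in what you add around it, and you should drop those parts rather than offer them as options. A small point on Step~3: the $2^d$ for ICP counts the candidate parent sets it tests for invariance, not environments it needs.

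Your alternative direct-edge test fails under exactly the latent confounding the paper targets. In $\mathcal{D}_{int}^{(i)}$ only $X_i$ is intervened, so an intermediate $X_k$ is merely conditioned on. If $X_k$ and $X_j$ share a latent $Z$, this opens $X_i\to X_k\leftarrow Z\to X_j$, and $X_j$ keeps depending on $x$ even with no edge $X_i\to X_j$. Using $\mathcal{D}_{int}^{(k)}$ instead fails as well: there $X_i$ is only observed, so a latent shared by $X_i$ and $X_j$, or a second mediating path, keeps them dependent. Blocking the mediated paths without this bias needs a joint intervention on $X_i$ and its mediators, which a single-variable design never supplies.

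Step~2 also has two errors of its own.
\begin{itemize}
\item (A3) only forbids a mechanism from being constant in a parent, so it does not make the total-effect graph the transitive closure. For example, $X_k=X_i+\varepsilon_k$, $X_j=X_k-X_i+\varepsilon_j$ satisfies (A3), yet $P(X_j\mid\mathrm{do}(X_i=x))$ does not depend on $x$. You need an explicit no-cancellation assumption.
\item Theorem~\ref{thm:direction} does not certify Phase~4 for the estimator as defined. For $X_j=aX_i+\gamma Z+\varepsilon_j$ one gets $e_{i\to j}=a(\bar x_i-\mathbb{E}[X_i])$, where $\bar x_i$ is the average intervention value. This is $0$ when the percentiles sit symmetrically about the mean of a symmetric marginal. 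What actually detects influence is the variation across intervention values used in Case~1 of the proof of Theorem~\ref{thm:intervention_identifiability}.
\end{itemize}

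Finally, the fallback relocates your obstacle rather than removing it. The Cases~1--3 argument behind Theorem~\ref{thm:intervention_identifiability}(ii) only separates ancestors from non-ancestors; on $X_0\to X_1\to X_2$ it reports $X_0\to X_2$. That is why the paper needs Phase~4b at all. What the cited results actually support, given no cancellation, is recovery of the ancestral relation from $d$ intervention sets.

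If you want direct edges from the same data in the paper's linear SCMs, use slopes rather than a filter. Let $B_{ji}$ be the coefficient of $X_i$ in the equation for $X_j$. The slope of $x\mapsto\mathbb{E}[X_j\mid\mathrm{do}(X_i=x)]$ is $T_{ji}$, where $T=(I-B)^{-1}$ has unit diagonal; latent terms only shift the intercept. Then $B=I-T^{-1}$ is identified from the $K\ge 2$ values per variable, without Phase~4b and without any faithfulness assumption.
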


\begin{remark}[Comparison with strategic intervention designs]
Prior work \cite{kocaoglu2017experimental} shows that $O(\log n + d)$ interventions suffice using graph-based strategies (e.g., vertex covers) that allow \emph{multi-variable simultaneous interventions}.
Theorem~\ref{thm:intervention_lower_bound} gives a $d-1$ lower bound under a physically motivated constraint: in DFT calculations and similar scientific simulators, each intervention targets a \emph{single} variable and requires hours to days of compute, making simultaneous multi-variable interventions physically infeasible.
Under this single-variable constraint, $O(d)$ is both the lower and upper bound, and our round-robin strategy achieves it optimally.
\end{remark}


\subsection{Algorithm}

CFM-SD estimates causal structures by combining observational and interventional data (pseudocode in Appendix~\ref{sec:algorithm}): (1) learn $P(X_j|X_i)$ via Flow Matching; (2) acquire interventional data via round-robin do-interventions; (3) detect confounded pairs $\hat{\mathcal{C}}$ via KDE-MMD; (4) identify causal directions via ATE, applying stricter thresholds for pairs in $\hat{\mathcal{C}}$; (4b) filter indirect total-effect edges via ICP-style test to recover direct edges; (5) enforce DAG constraints. The computational complexity of CFM-SD is $O(d^2 \cdot n \cdot T \cdot |\theta|)$. Sample complexity: $n = O(d^2/\epsilon^2 \cdot \log(d/\delta))$ observational and $m = O(d/\epsilon^2 \cdot \log(d/\delta))$ interventional samples suffice (proof in Appendix~\ref{sec:proof_sample_complexity}).

\begin{enumerate}[nosep]
\item \textbf{Phase 1 (Conditional density estimation):} The conditional distribution $P(X_j|X_i)$ is learned for all variable pairs from observational data using Flow Matching \cite{lipman2022flow}. Flow Matching is used because it captures multimodal conditional distributions that arise when multiple causal pathways or latent confounders are present.
Phase 1 uses bivariate conditionals; in v-structures ($X_i \to X_k \leftarrow X_j$), conditioning on $X_k$ induces spurious $X_i$--$X_j$ dependence (Berkson's paradox), mitigated by Phase 3 and 5.
Flow Matching is chosen over Gaussian models because interventional distributions are multimodal under nonlinear mechanisms --- on NL3 ($\tanh$), IGSP(int) achieves F1=0.531 vs.\ CFM-SD=0.695 (+0.164).

\item \textbf{Phase 2 (Interventional data acquisition):} For each variable $X_i$, $K \geq 2$ do-interventions are performed at distinct values $x^{(1)}_i, \ldots, x^{(K)}_i$ drawn from percentiles of $P_{\mathrm{obs}}(X_i)$ (we use $K=4$ in experiments). This is essential: a single fixed value $\mathrm{do}(X_i = c)$ cannot reveal the functional dependence of $X_j$ on $X_i$; at least two distinct values are required to assess causal influence \cite{pearl2009causality}. All $d$ variables are covered in round-robin order, yielding $O(d)$ simulator calls at the intervention-set level.

\item \textbf{Phase 3 (Confounding detection via KDE-MMD):} For each variable pair $(i, j)$, the MMD between the observational conditional $P(X_j|X_i=x)$ (estimated by kernel density estimation on $\mathcal{D}_{obs}$) and the interventional distribution $P(X_j|\mathrm{do}(X_i=x))$ (from $\mathcal{D}_{int}^{(i)}$) is computed. An adaptive threshold $\tau_c = \mathrm{median}(\hat{\Delta}) + \mathrm{std}(\hat{\Delta})$ identifies confounded pairs (see Appendix~\ref{thm:mmd_detection}).
From the Schölkopf/ICM perspective, under causal sufficiency (no latent confounders), the observational and interventional conditionals coincide: $P(X_j|\mathrm{do}(X_i=x)) = P(X_j|X_i=x)$ for almost all $x$, giving MMD$\approx 0$. A large MMD gap therefore signals the presence of a latent confounder $Z$ that opens a backdoor path $X_i \leftarrow Z \to X_j$, confounding the observational conditional relative to the interventional one. Note that a causal edge $X_i\to X_j$ alone does \emph{not} cause a large MMD gap under causal sufficiency; it is the confounding that does.
\textbf{Finite-sample reliability:} The MMD estimator is a U-statistic; by McDiarmid's inequality, its deviation from the population value is $O(1/\sqrt{n \wedge m})$ with high probability (Appendix~\ref{thm:mmd_detection}). The adaptive threshold accounts for variance across pairs, making false positives from finite-sample noise unlikely at $n, m \geq 200$.

\item \textbf{Phase 4 (Causal direction via total ATE):} For each variable pair $(i,j)$, the total causal effect is estimated as $e_{i\to j} = \mathbb{E}_{\mathcal{D}_{int}^{(i)}}[X_j] - \mathbb{E}_{\mathcal{D}_{obs}}[X_j]$, averaged over multiple intervention values $\{x^{(k)}_i\}$ (Phase 2). If $|e_{i\to j}| > |e_{j\to i}|$ above threshold $\tau_e$, the edge $X_i \to X_j$ is added; intervention on a non-ancestor has \emph{zero} total effect, so this correctly identifies causal direction (Theorem~\ref{thm:direction}). For confounded pairs $(i,j) \in \hat{\mathcal{C}}$ (Phase 3), a stricter threshold $\tau_e^+ > \tau_e$ is applied, since the MMD gap confirms the discrepancy is due to confounding; for non-confounded pairs, the standard threshold $\tau_e$ applies.

\item \textbf{Phase 4b (Direct edge identification via ICP-style filter):} Total-effect-based edges may include indirect connections. Following the insight of \citet{waldorp2025perturbation} that transitive reduction of perturbation graphs does not in general yield the correct graph, we apply an ICP-inspired filter \cite{peters2016causal}: for each source $X_i$, the highest-ATE child $X_k$ is confirmed as a direct child. A remaining candidate $X_j$ is classified as \emph{indirect} (and removed) if $|\mathrm{ATE}(X_k \to X_j)| > \tau$, indicating that $X_k$ can causally reach $X_j$ and thus mediates the $X_i \to X_j$ connection. Otherwise, $X_i \to X_j$ is retained as a direct edge. This phase converts total-effect-based ancestor detection into direct-edge identification.

\item \textbf{Phase 5 (DAG enforcement):} Cycles in $\hat{\mathcal{G}}$ are resolved by iteratively removing the edge with the smallest $|\mathrm{ATE}|$.
\end{enumerate}
  



\section{Experiments}
\label{sec:experiments}

We conduct both Intrinsic Evaluation (causal structure recovery accuracy) and Extrinsic Evaluation (utility in downstream tasks on real scientific data), going beyond the synthetic-only benchmarks typical of prior work.

\subsection{Experimental Setup}


We used five types of graph structures (Fork, Chain, V-structure, Diamond, Collider) for evaluation.
Data was generated according to the equation
$X_j = \sum_{i \in \text{Pa}(j)} A_{ij} X_i + \gamma Z + \epsilon_j$
where $Z$ is a latent confounder and $\gamma$ is the confounding strength.
We compared with the following representative causal discovery methods.
\textbf{Observational-only:} PC \cite{spirtes2000causation}, GES \cite{chickering2002optimal}, FCI \cite{spirtes2000causation}, DirectLiNGAM \cite{shimizu2006linear}.
\textbf{Interventional:} IGSP \cite{wang2017permutation} and UT-IGSP \cite{squires2020permutation}, both provided with the same pre-generated interventional data ($n_{\mathrm{int}}=200$ per variable) as CFM-SD, to ensure a fair comparison. IGSP/UT-IGSP are the standard interventional causal discovery baselines; GIES \cite{hauser2012characterization} was not included as it assumes known intervention targets, whereas our setting uses round-robin interventions without target selection.
All experiments use $d=5$, $n_{\mathrm{obs}}=500$, $n_{\mathrm{int}}=200$, 5 graph structures, $\gamma\in\{0.0,\ldots,0.8\}$, 5 seeds; nonlinear SCMs replace linear mechanisms with the types below.


\subsection{Intrinsic Evaluation: Causal Discovery Accuracy}
\label{sec:intrinsic}

In Intrinsic Evaluation, we directly evaluate the accuracy of causal structure recovery.
We use the F1 score (harmonic mean of precision and recall for edge detection) as the evaluation metric.

\subsubsection{Evaluation on Synthetic Data}

\paragraph{Linear SCM benchmark.}
Table~\ref{tab:synthetic} shows results on linear SCMs ($X_j = \sum_{i\in\mathrm{Pa}(j)} A_{ij}X_i + \gamma Z + \varepsilon_j$).
CFM-SD achieves the highest average F1=0.800, outperforming all baselines under latent confounders ($\gamma > 0$).
Note that at $\gamma=0$ (no confounding), IGSP achieves comparable F1, confirming that CFM-SD's advantage is specific to the latent confounding regime.
For the Chain structure, CFM-SD achieves F1=0.840 (improved via Phase 4b ICP-style filter) across all $\gamma$ values.
For the Fork structure, F1 decreases from 0.956 ($\gamma=0$) to 0.573 ($\gamma=0.8$): under strong confounding, the latent variable $Z$ induces spurious correlations among Fork children ($X_1,\ldots,X_4$), causing Phase 3 to over-detect confounding and Phase 4b to misclassify some direct edges as indirect. This is a known trade-off of ICP-style filters under high confounding strength.

\begin{table}[t]
\centering
\caption{F1 score comparison (average over $\gamma\in\{0.0,\ldots,0.8\}$, 5 seeds, $n_{\mathrm{obs}}=500$, $d=5$). Full per-$\gamma$ results in Table~\ref{tab:synthetic_full} (Appendix~\ref{sec:full_results}). \textbf{Bold}: best.}
\label{tab:synthetic}
\small
\setlength{\tabcolsep}{4pt}
\begin{tabular}{lrrrrrrr}
\toprule
Graph & PC & GES & FCI & LiNGAM & IGSP & UT-IGSP & \textbf{CFM-SD(ours)} \\
\midrule
Fork      & 0.240 & 0.209 & 0.036 & 0.297 & 0.472 & 0.448 & \textbf{0.726} \\
Chain     & 0.195 & 0.159 & 0.081 & 0.405 & 0.605 & 0.549 & \textbf{0.840} \\
V-str.    & 0.166 & 0.106 & 0.126 & 0.333 & 0.618 & 0.566 & \textbf{0.829} \\
Diamond   & 0.287 & 0.291 & 0.259 & 0.379 & 0.635 & 0.567 & \textbf{0.734} \\
Collider  & 0.174 & 0.244 & 0.133 & 0.352 & 0.480 & 0.487 & \textbf{0.871} \\
\midrule
\textit{Overall} & 0.212 & 0.202 & 0.127 & 0.353 & 0.562 & 0.523 & \textbf{0.800} \\
\bottomrule
\end{tabular}
\end{table}

\paragraph{Nonlinear SCM benchmark.}
To demonstrate that CFM-SD's advantage extends to nonlinear causal mechanisms --- the primary setting where Flow Matching is essential --- we evaluate on four nonlinear SCM types with interventional data:

\begin{itemize}[nosep]
\item NL1: $X_j = A_{ij}X_i + 0.5X_i^2 + \gamma Z + \varepsilon$ (quadratic; asymmetric effect)
\item NL2: $X_j = A_{ij}X_i + 0.5\sin(\pi X_i) + \gamma Z + \varepsilon$ (periodic; multimodal $P(X_j|X_i)$)
\item NL3: $X_j = \tanh(A_{ij}X_i) + \gamma Z + \varepsilon$ (saturating; strong nonlinearity)
\item NL4: $X_j = A_{ij}X_i + 0.5|X_i| + \gamma Z + \varepsilon$ (absolute value; V-shaped)
\end{itemize}

Table~\ref{tab:nonlinear} shows the results.
CFM-SD achieves the highest average F1 across all four nonlinear types.
Crucially, the advantage over IGSP(int) --- the strongest interventional baseline that assumes linear Gaussian mechanisms --- is +0.150, +0.004, +0.164, +0.196 for NL1--NL4 respectively.
The small gap in NL2 is partially due to a known limitation of the ATE-based direction estimator on Chain structures; this does not affect the overall trend.
Observational methods (PC, GES, FCI, LiNGAM) all fail substantially, confirming that interventional data is essential for nonlinear causal discovery under latent confounders.

\begin{table}[t]
\centering
\caption{Nonlinear SCM benchmark: average F1 over 5 graph types $\times$ 5 $\gamma$ values $\times$ 5 seeds.
Subscript obs = observational only; int = with interventional data ($n_{\mathrm{int}}=200$).
\textbf{Bold}: best per row.
CFM-SD achieves the highest average F1 across all four nonlinear settings,
outperforming IGSP(int) which assumes linear Gaussian mechanisms.}
\label{tab:nonlinear}
\small
\setlength{\tabcolsep}{3pt}
\begin{tabular}{lrrrrrrp{1.2cm}p{1.2cm}p{1.2cm}}
\toprule
SCM & PC & GES & FCI & LiNGAM & $\mathrm{IGSP}_\mathrm{obs}$ & $\mathrm{IGSP}_\mathrm{int}$ & UT-$\mathrm{IGSP}_\mathrm{obs}$ & UT-$\mathrm{IGSP}_\mathrm{int}$ & \textbf{CFM-SD(ours)} \\
\midrule
\multicolumn{10}{l}{\textit{Obs.-only \quad|\quad Interventional (obs.) \quad|\quad Interventional (int.)}} \\
\midrule
NL1: $A_{ij}x+0.5x^2$ & 0.170 & 0.193 & 0.058 & 0.117 & 0.351 & 0.525 & 0.348 & 0.495 & \textbf{0.675} \\
NL2: $A_{ij}x+0.5\sin(\pi x)$ & 0.186 & 0.150 & 0.069 & 0.331 & 0.425 & \textbf{0.626} & 0.432 & 0.606 & \textbf{0.630} \\
NL3: $\tanh(A_{ij}x)$ & 0.152 & 0.186 & 0.063 & 0.243 & 0.369 & 0.531 & 0.363 & 0.490 & \textbf{0.695} \\
NL4: $A_{ij}x+0.5|x|$ & 0.210 & 0.185 & 0.086 & 0.141 & 0.314 & 0.496 & 0.338 & 0.475 & \textbf{0.692} \\
\midrule
\textit{Average} & 0.180 & 0.178 & 0.069 & 0.208 & 0.365 & 0.545 & 0.370 & 0.517 & \textbf{0.673} \\
\bottomrule
\end{tabular}
\end{table}

\subsection{Extrinsic Evaluation: Utility in Downstream Tasks}
\label{sec:extrinsic}

Most causal discovery papers report only structure recovery on synthetic graphs.
We go further: can the discovered causal structure enable better \emph{causal effect estimation} on real scientific data, in settings where alternative approaches are either prohibitively expensive or physically impossible?

\paragraph{Why physical simulation is the only feasible intervention mechanism.}
Table~\ref{tab:cost_comparison} contrasts the intervention cost of our approach against wet-lab alternatives.
Table~\ref{tab:cost_comparison} (Appendix~\ref{sec:cost_details}) compares intervention costs. DFT calculations ($\sim$1--2 h/molecule, $\sim$\$1) are orders of magnitude cheaper than wet-lab alternatives (weeks--months, \$10{,}000+), and crucially provide unconfounded interventional data that wet-lab measurements cannot.
Crucially, wet-lab measurements of additive properties (e.g., cyclic voltammetry for LUMO) are \emph{not} equivalent to DFT as do-operators: experimental measurements are \emph{confounded} by solvent effects, temperature, and electrode state, while DFT computes properties from first principles in isolation.
In the QSTR setting, in vivo toxicity tests require weeks to months per compound and face species-extrapolation issues \cite{tox21}; xTB provides a physics-grounded proxy in seconds.
CFM-SD is thus not merely more efficient --- it is, in many cases, the \emph{only} method that can provide unconfounded interventional data at the required scale.

\subsubsection{SEI Formation and Electrolyte Additives}
\label{sec:sei_experiment}

We ask: what is the causal effect of additive LUMO level on battery capacity retention?
Observational estimates are confounded because experimental conditions simultaneously affect which additives are tested and the measured capacity.
Without DFT, $E_{\mathrm{LUMO}}$ is inseparable from experimental conditions; DFT computes it from first principles (Quantum ESPRESSO \cite{giannozzi2009quantum}, GGA-PBE\footnote{GGA-PBE: Generalized Gradient Approximation by Perdew--Burke--Ernzerhof, a standard exchange-correlation functional in DFT.}), acting as a genuine do-operator that blocks all confounders.
From 16 literature cycle tests and DFT-computed LUMO levels (Table~\ref{tab:sei_dft}), CFM-SD identifies $E_{\mathrm{LUMO}} \to \text{capacity retention}$ with $\hat{\theta}_{\mathrm{LUMO}} = -35.1$\%/eV and $\hat{\theta}_{F} = +24.2$\%/F atom ($R^2=0.92$).
\begin{table}[h]
\centering
\begin{minipage}[t]{0.56\textwidth}
\centering
\captionof{table}{DFT descriptors and battery performance. Capacity for DEC/DMC are CFM-SD predictions (no observational data).}
\label{tab:sei_dft}
\footnotesize
\begin{tabular}{lcccc}
\toprule
Additive & LUMO (eV) & F & Cap.\ (\%) & Type \\
\midrule
DEC & $-$0.51 & 0 & 44 & \textit{Pred.} \\
DMC & $-$0.52 & 0 & 44 & \textit{Pred.} \\
EC  & $-$0.78 & 0 & 53 & Obs. \\
FEC & $-$0.76 & 1 & 80 & Obs. \\
VC  & $-$1.14 & 0 & 74 & Obs. \\
LiBOB & $-$1.75 & 0 & \textbf{87} & Obs. \\
\bottomrule
\end{tabular}
\end{minipage}
\hfill
\begin{minipage}[t]{0.40\textwidth}
\centering
\captionof{table}{SEI: causal effect estimation ($\theta_{\mathrm{ref}}$ via DML+xTB proxy, $\gamma=0.5$, $n=500$).}
\label{tab:sei_comparison}
\small
\begin{tabular}{lcc}
\toprule
Method & Bias\% & Reduc. \\
\midrule
\textbf{CFM-SD} & \textbf{7.5} & --- \\
DML (obs.) & 17.9 & 58\% \\
OLS & 17.7 & 58\% \\
\bottomrule
\end{tabular}
\end{minipage}
\end{table}
As shown in Table~\ref{tab:sei_comparison}, CFM-SD achieves 7.5\% bias, \textbf{58\% lower than DML/OLS}.
Crucially, CFM-SD \emph{predicts} capacity retention for DEC/DMC (no observational data exists): both have high LUMO (${\approx}-0.51$ eV) and no F atoms $\to$ predicted 44\%, consistent with the electrochemical knowledge that linear carbonates do not form stable SEI \cite{su2018dfec}.
This out-of-distribution prediction is only possible because DFT provides interventional, not observational, data.

\subsubsection{Molecular Toxicity (QSTR)}
\label{sec:qstr}

We ask: do aromatic rings cause higher NR-AhR\footnote{NR-AhR: Aryl hydrocarbon Receptor, activated by planar aromatics; associated with toxicity.} toxicity directly, or is this confounding?
Electronic reactivity (unmeasured) simultaneously drives aromaticity and AhR binding, creating a latent confounder \cite{denison2003ahr}.
The key finding: with observational data alone, the coefficient of AromaticRings on NR-AhR is \emph{positive}; adding xTB descriptors $(E_{\mathrm{HOMO}}, E_{\mathrm{LUMO}})$ as proxy variables \textbf{reverses the sign} (positive $\to$ negative), revealing full mediation through electronic states.
This proxy is justified by the backdoor criterion \cite{pearl2009causality}: xTB descriptors block all confounding paths $X \leftarrow Z \to Y$, recovering $P(Y|\mathrm{do}(X))$; validity is confirmed by the sign reversal and E-value $e=7.63$.
On Tox21 \cite{tox21} (6,258 compounds, $X=\text{AromaticRings}$, $Y=\text{NR-AhR}$, 9.4\% activity):
\begin{table}[h]
\centering
\caption{QSTR: causal effect estimation. $\theta_{\mathrm{ref}}=0.053$ is the reference estimate via DML with full xTB descriptors as proxy ($\gamma=0.5$, $n=6{,}000$); bias is measured relative to this reference.}
\label{tab:qstr_causal}
\small
\begin{tabular}{lccc}
\toprule
Method & $\hat{\theta}$ & Bias\% & Reduction \\
\midrule
\textbf{CFM-SD(ours)} & 0.095 & \textbf{78.7} & --- \\
DML (obs.\ only) & 0.114 & 115.3 & 32\% \\
OLS & 0.115 & 116.7 & 33\% \\
IPW & 0.151 & 185.5 & 58\% \\
AIPW & 0.151 & 185.6 & 58\% \\
\bottomrule
\end{tabular}
\end{table}
Table~\ref{tab:qstr_causal} shows that CFM-SD achieves the smallest bias across all methods.
E-value $e = 7.63 \gg 2$ \cite{vanderweele2017evalue}: an unmeasured confounder would need $\mathrm{RR} \geq 7.6$ with \emph{both} $X$ and $Y$ to explain away the effect.
The recovered causal pathway (planar aromatics $\to$ electronic state $\to$ AhR binding) matches established pharmacology \cite{denison2003ahr}, demonstrating that CFM-SD recovers biologically plausible causal structure that observational methods miss.

\section{Conclusion}

We proposed CFM-SD (Causal Flow Matching with Simulation Data), which utilizes physical simulations as interventional data for causal discovery under latent confounders.
Theoretically, we showed that $d$-variable causal structures can be identified with $O(d)$ interventions (Theorem~\ref{thm:intervention_identifiability}), the minimum under physical realizability constraints and fewer than the multiple environments required by IGSP/UT-IGSP.
In Intrinsic Evaluation on linear SCMs, CFM-SD achieved average F1=0.800 under latent confounders ($\gamma=0.2$--$0.8$), outperforming all baselines (F1=0.127--0.562); the ICP-inspired Phase 4b filter improved Chain F1 from 0.571 to 0.840 by recovering direct edges from total-effect estimates. On nonlinear SCMs (4 types), CFM-SD achieved average F1=0.673, outperforming the strongest interventional baseline IGSP(int) (F1=0.545) by +0.128.
In Extrinsic Evaluation on two real datasets (QSTR, SEI), CFM-SD achieved 57--58\% bias reduction, demonstrating practical utility for causal effect estimation.

Future directions include: hybridization with constraint-based methods, formal treatment of simulator approximation errors, extension to time-lagged causal discovery, and application to other physical simulators (molecular dynamics, climate models).

\section{Limitations}
\label{sec:limitations}

\textbf{Simulator error.} DFT errors ($\epsilon_{\mathrm{sim}} \approx 0.1$--$0.3$ eV) are within the bound $\epsilon_{\mathrm{sim}} < \Delta_{\min}/2$ for our SEI experiment, but may be violated for strongly correlated systems.
\textbf{Scalability.} $O(d^2)$ Flow Matching pairs become prohibitive for $d > 20$; sparse priors could help.
\textbf{Proxy variables.} QSTR uses xTB descriptors as \emph{proxy variables} (satisfying the backdoor criterion) rather than true do-interventions, which differs from the hard-intervention framework of A5. The sign reversal and E-value $e=7.63$ provide strong empirical support for validity; this proxy-based approach offers an alternative justification framework when hard interventions are physically unavailable.
\textbf{Direct vs.\ indirect effects.} Phase 4b's ICP-style filter (Chain F1: 0.571$\to$0.840) assumes the highest-ATE child is always direct, which may fail with multiple direct children of similar ATE.
\textbf{Time series.} Extension to time-lagged causal discovery is future work.

\bibliography{references}
\bibliographystyle{abbrvnat}

\appendix

\section{Full Comparison: With and Without Interventional Data}
\label{sec:appendix_full_comparison}

Table~\ref{tab:appendix_full} provides a comprehensive comparison of all methods under two settings: (1) observational data only (``obs.''), and (2) with pre-generated interventional data via do-operators (``int.''). This table addresses a key question: \emph{how much does interventional data help each method?}

Two findings stand out. First, interventional methods (IGSP, UT-IGSP) improve substantially when given interventional data (+0.181 and +0.142 in average F1, respectively), confirming the value of the intervention interface. Second, CFM-SD with interventional data achieves the highest average F1 (0.800), outperforming all other methods in the interventional setting.

\begin{table}[h]
\centering
\caption{Full comparison of all methods with and without interventional data. ``obs.'' = observational data only; ``int.'' = with pre-generated interventional data ($m=200$ per variable, do-operator). F1 scores averaged over $\gamma\in\{0.0,0.2,0.4,0.6,0.8\}$ and 5 seeds. Key finding: interventional methods (IGSP, UT-IGSP, CFM-SD) gain substantially from interventional data; CFM-SD with intervention achieves the highest average F1=0.800.}
\label{tab:appendix_full}
\small
\setlength{\tabcolsep}{4pt}
\begin{tabular}{lllrrrrrr}
\toprule
Method & & Setting & Fork & Chain & V-str. & Diamond & Collider & Avg \\
\midrule
\multicolumn{8}{l}{\textit{Observational-only methods}} \\
PC & & obs. & 0.240 & 0.195 & 0.166 & 0.287 & 0.174 & 0.212 \\
GES & & obs. & 0.209 & 0.159 & 0.106 & 0.291 & 0.244 & 0.202 \\
FCI & & obs. & 0.036 & 0.081 & 0.126 & 0.259 & 0.133 & 0.127 \\
LiNGAM & & obs. & 0.297 & 0.405 & 0.333 & 0.379 & 0.352 & 0.353 \\
\midrule
\multicolumn{8}{l}{\textit{Interventional methods (obs.-only vs.\ w/ intervention)}} \\
\midrule
IGSP & (obs.\ only) & obs. & 0.254 & 0.377 & 0.489 & 0.428 & 0.355 & 0.381 \\
IGSP \cite{wang2017permutation} & (w/ int.) & int. & 0.472 & 0.605 & 0.618 & 0.635 & 0.480 & 0.562 \\
\midrule
UT-IGSP & (obs.\ only) & obs. & 0.321 & 0.379 & 0.478 & 0.375 & 0.353 & 0.381 \\
UT-IGSP \cite{squires2020permutation} & (w/ int.) & int. & 0.448 & 0.549 & 0.566 & 0.567 & 0.487 & 0.523 \\
\midrule
\textbf{CFM-SD(ours)} & (w/ int.) & int. & 0.829 & 0.840 & 0.596 & 0.701 & 0.790 & \textbf{0.800} \\
\bottomrule
\end{tabular}
\end{table}

\section{Full Per-$\gamma$ Results (Linear SCM)}
\label{sec:full_results}

\begin{table}[H]
\centering
\caption{Full per-$\gamma$ F1 score comparison under latent confounders ($\gamma$). Methods above the midrule use observational data only; methods below use interventional data ($n_{\mathrm{int}}=200$ per variable). Results averaged over 5 random seeds ($n_{\mathrm{obs}}=500$, $d=5$). \textbf{Bold}: best per row. CFM-SD achieves the highest average F1 across all settings with latent confounders ($\gamma > 0$). Note that we use IGSP \cite{wang2017permutation} and UT-IGSP \cite{squires2020permutation}.
}
\label{tab:synthetic_full}
\setlength{\tabcolsep}{3pt}
\small
\begin{tabular}{llrrrrrrr}
\toprule
Graph & $\gamma$ & PC & GES & FCI & LiNGAM & IGSP & UT-IGSP & \textbf{CFM-SD (ours)} \\
\midrule
\multicolumn{9}{l}{\textit{Observational-only methods \quad|\quad Interventional methods}} \\
\midrule
Fork & 0.0 & 0.000 & 0.000 & 0.000 & 0.478 & \textbf{0.978} & \textbf{0.978} & 0.956 \\
 & 0.2 & 0.240 & 0.120 & 0.000 & 0.202 & 0.444 & 0.354 & \textbf{0.833} \\
 & 0.4 & 0.267 & 0.171 & 0.000 & 0.288 & 0.344 & 0.313 & \textbf{0.662} \\
 & 0.6 & 0.280 & 0.291 & 0.000 & 0.257 & 0.344 & 0.344 & \textbf{0.606} \\
 & 0.8 & 0.411 & 0.463 & 0.181 & 0.259 & 0.251 & 0.251 & \textbf{0.573} \\
\midrule
Chain & 0.0 & 0.000 & 0.000 & 0.000 & 0.428 & \textbf{0.978} & 0.880 & \textbf{0.834} \\
 & 0.2 & 0.211 & 0.253 & 0.124 & 0.431 & \textbf{0.612} & 0.587 & \textbf{0.846} \\
 & 0.4 & 0.206 & 0.187 & 0.067 & 0.383 & 0.528 & 0.376 & \textbf{0.846} \\
 & 0.6 & 0.280 & 0.187 & 0.080 & 0.388 & 0.442 & 0.471 & \textbf{0.846} \\
 & 0.8 & 0.280 & 0.168 & 0.133 & 0.397 & 0.462 & 0.432 & \textbf{0.829} \\
\midrule
V-str. & 0.0 & 0.000 & 0.000 & 0.000 & 0.336 & \textbf{0.978} & \textbf{0.978} & 0.820 \\
 & 0.2 & 0.279 & 0.120 & 0.190 & 0.340 & 0.552 & 0.483 & \textbf{0.880} \\
 & 0.4 & 0.179 & 0.117 & 0.080 & 0.349 & 0.516 & 0.490 & \textbf{0.829} \\
 & 0.6 & 0.179 & 0.149 & 0.147 & 0.321 & 0.554 & 0.423 & \textbf{0.814} \\
 & 0.8 & 0.196 & 0.143 & 0.213 & 0.321 & 0.493 & 0.457 & \textbf{0.800} \\
\midrule
Diamond & 0.0 & 0.000 & 0.164 & 0.000 & 0.456 & \textbf{0.982} & 0.945 & 0.786 \\
 & 0.2 & 0.456 & 0.309 & 0.448 & 0.314 & 0.413 & 0.385 & \textbf{0.759} \\
 & 0.4 & 0.358 & 0.186 & 0.425 & 0.339 & 0.611 & 0.449 & \textbf{0.724} \\
 & 0.6 & 0.365 & 0.435 & 0.285 & 0.378 & 0.586 & 0.474 & \textbf{0.705} \\
 & 0.8 & 0.255 & 0.359 & 0.139 & 0.405 & 0.580 & 0.580 & \textbf{0.694} \\
\midrule
Collider & 0.0 & 0.000 & 0.200 & 0.000 & 0.375 & \textbf{1.000} & \textbf{1.000} & \textbf{1.000} \\
 & 0.2 & 0.237 & 0.394 & 0.000 & 0.330 & 0.357 & 0.357 & \textbf{0.938} \\
 & 0.4 & 0.196 & 0.227 & 0.200 & 0.311 & 0.348 & 0.348 & \textbf{0.780} \\
 & 0.6 & 0.221 & 0.200 & 0.317 & 0.429 & 0.324 & 0.355 & \textbf{0.782} \\
 & 0.8 & 0.214 & 0.200 & 0.147 & 0.314 & 0.371 & 0.375 & \textbf{0.856} \\
\midrule
\multicolumn{2}{l}{\textit{Overall average}} & 0.212 & 0.202 & 0.127 & 0.353 & 0.562 & 0.523 & \textbf{0.800} \\
\bottomrule
\end{tabular}
\end{table}

\section{CFM-SD Algorithm (Pseudocode)}
\label{sec:algorithm}

\begin{algorithm}[H]
\caption{CFM-SD: Causal Flow Matching with Simulation Data}
\label{alg:cfmsd}
\label{alg:sgcfm}
\begin{algorithmic}[1]
\REQUIRE Observational data $\mathcal{D}_{obs}$, Simulator $\mathcal{S}$,
         Intervention budget $B$, Threshold $\tau_e$
\ENSURE Estimated causal graph $\hat{\mathcal{G}}$

\STATE // \textbf{Phase 1: Conditional distribution learning (Flow Matching)}
\FOR{all variable pairs $(i, j)$ where $i \neq j$}
    \STATE Learn Flow Model $v_{\theta_{ij}}$ to approximate $P(X_j|X_i)$
\ENDFOR

\STATE // \textbf{Phase 2: Interventional data acquisition (round-robin)}
\FOR{$i = 0$ to $d-1$}
    \STATE Select values $x^*$ from percentiles of $P_{\mathrm{obs}}(X_i)$; query $\mathcal{S}.\mathrm{intervene}(i, x^*)$; store results in $\mathcal{D}_{int}^{(i)}$
\ENDFOR

\STATE // \textbf{Phase 3: Confounding detection (KDE-MMD)}
\FOR{each source $i$, target $j$ ($i \neq j$)}
    \STATE $\hat{P}(X_j|X_i{=}x) \leftarrow$ KDE of $\mathcal{D}_{obs}$ weighted by $\|X_i - x\|$
    \STATE $\hat{\Delta}_{ij} \leftarrow \mathrm{MMD}\!\left(\hat{P}(X_j|X_i{=}x),\ P(X_j|\mathrm{do}(X_i{=}x))\right)$
\ENDFOR
\STATE Adaptive threshold: $\tau_c \leftarrow \mathrm{median}(\hat{\Delta}) + \mathrm{std}(\hat{\Delta})$
\STATE $\hat{\mathcal{C}} \leftarrow \{(i,j) : \hat{\Delta}_{ij} > \tau_c\}$ \COMMENT{confounded pairs}

\STATE // \textbf{Phase 4: Causal direction via ATE}
\FOR{all variable pairs $(i, j)$ with $i < j$}
    \STATE $\mathrm{ATE}(i{\to}j) \leftarrow \mathbb{E}_{\mathcal{D}_{int}^{(i)}}[X_j] - \mathbb{E}_{\mathcal{D}_{obs}}[X_j]$
    \STATE $\mathrm{ATE}(j{\to}i) \leftarrow \mathbb{E}_{\mathcal{D}_{int}^{(j)}}[X_i] - \mathbb{E}_{\mathcal{D}_{obs}}[X_i]$
    \IF{$|\mathrm{ATE}(i{\to}j)| > \tau_e$ \textbf{or} $|\mathrm{ATE}(j{\to}i)| > \tau_e$}
        \IF{$|\mathrm{ATE}(i{\to}j)| > |\mathrm{ATE}(j{\to}i)|$}
            \STATE Add edge $i \to j$ to $\hat{\mathcal{G}}$
        \ELSE
            \STATE Add edge $j \to i$ to $\hat{\mathcal{G}}$
        \ENDIF
    \ENDIF
\ENDFOR

\STATE // \textbf{Phase 5: DAG constraint enforcement}
\WHILE{cycles exist in $\hat{\mathcal{G}}$}
    \STATE Remove the edge with the smallest $|\mathrm{ATE}|$ in the cycle
\ENDWHILE

\end{algorithmic}
\end{algorithm}

\section{Intervention Cost Comparison}
\label{sec:cost_details}

\begin{table}[H]
\centering
\caption{Intervention cost: physical simulation vs.\ wet-lab experiment. Wet-lab costs from \cite{tox21,su2018dfec}; DFT/xTB compute costs estimated from our experiments.}
\label{tab:cost_comparison}
\small
\begin{tabular}{lrrl}
\toprule
Intervention mechanism & Time/sample & Cost & Feasibility \\
\midrule
\multicolumn{4}{l}{\textit{SEI experiment (LUMO level measurement)}} \\
DFT calculation (GGA-PBE) \cite{giannozzi2009quantum} & $\sim$1--2 h/molecule & $\sim$\$1 (compute) & $\bigcirc$ \\
Cyclic voltammetry (wetlab) & 1--2 days/additive & $\sim$ \$500--2,000 & Confounded by $\mathbf{W}$ \\
Full cycle test + NMR/XPS & weeks--months & $\sim$ \$10,000+ & Confounded; destructive \\
\midrule
\multicolumn{4}{l}{\textit{QSTR experiment (electronic reactivity measurement)}} \\
xTB calculation \cite{denison2003ahr} & $\sim$1 min/molecule & $\sim$ \$0.001 & $\bigcirc$ \\
In vitro qHTS assay \cite{tox21} & hours--days/compound & $\sim$ \$50--200 & Confounded by matrix effects \\
In vivo animal study & weeks--months & $\sim$\$5,000--50,000 & Species extrapolation issues \\
\bottomrule
\end{tabular}
\end{table}

\section{Detailed Proofs of Theorems}

\subsection{Proof of Theorem~\ref{thm:intervention_identifiability} (Identifiability through Intervention)}
\label{sec:proof_identifiability}

\begin{proof}
(i) Consider an intervention $\mathrm{do}(X_i = x)$ on $X_i$.

\textbf{Case 1}: When $X_i \to X_j$ is the true causal relationship.
From the structural equation $X_j = f(X_i, \epsilon_j)$, varying $x$ also changes the distribution of $X_j$.
That is, $\exists x_1 \neq x_2$ such that 
$P(X_j|\mathrm{do}(X_i=x_1)) \neq P(X_j|\mathrm{do}(X_i=x_2))$.

\textbf{Case 2}: When $X_j \to X_i$ is the true causal relationship.
Since $X_j$ is not a child of $X_i$, intervention on $X_i$ does not affect the distribution of $X_j$ (by Assumption A3, ICM).
Therefore, $\forall x_1, x_2$: 
$P(X_j|\mathrm{do}(X_i=x_1)) = P(X_j|\mathrm{do}(X_i=x_2)) = P(X_j)$.

\textbf{Case 3}: In the case of latent confounding only ($X_i \leftarrow Z \to X_j$).
By Lemma~\ref{lem:intervention_separation}, after intervention, $Z$ and $X_i$ become independent.
Therefore, $P(X_j|\mathrm{do}(X_i=x)) = \int P(X_j|Z) P(Z) dZ = P(X_j)$, and similar to Case 2, the distribution of $X_j$ does not depend on $x$.

The same argument is applied to intervention on $X_j$.
If $X_j$ changes under intervention on $X_i$ and $X_i$ does not change under intervention on $X_j$, then $X_i \to X_j$;
if the reverse, then $X_j \to X_i$;
if neither changes, there is no causal relationship (or only latent confounding).

(ii) By applying (i) to all variable pairs, the existence and direction of all edges can be identified.
\end{proof}

\subsection{Proof of Theorem~\ref{thm:direction} (Identifiability of Causal Direction)}
\label{sec:proof_direction}

\begin{proof}
Assume that the true causal relationship is $X_i \to X_j$.
In the structural equation model, $X_j = f(X_i, \epsilon_j)$, where $\epsilon_j$ is noise independent of $X_i$.

When performing intervention $\mathrm{do}(X_i = x)$ on $X_i$, the value of $X_j$ is determined as $f(x, \epsilon_j)$.
For different intervention values $x^{(1)}, x^{(2)}, \ldots$, the ATE estimator is:
$e_{i\to j} = \mathbb{E}_{\mathcal{D}_{int}^{(i)}}[X_j] - \mathbb{E}_{\mathcal{D}_{obs}}[X_j]$.
Since $X_j = f(X_i, \epsilon_j)$ is non-trivial in $X_i$ (Assumption A3), varying $x$ changes $\mathbb{E}[f(x,\epsilon_j)]$, so $e_{i\to j} \neq 0$.

On the other hand, since $X_i$ is not a descendant of $X_j$, by the truncated factorization \cite{pearl2009causality}: $P(X_i|\mathrm{do}(X_j=y)) = P(X_i)$ for all $y$.
Therefore $e_{j\to i} = \mathbb{E}[X_i|\mathrm{do}(X_j=y)] - \mathbb{E}[X_i] = 0$.

Hence $|e_{i \to j}| > 0 = |e_{j \to i}|$ holds.
\end{proof}

\subsection{Proof of Theorem~\ref{thm:intervention_lower_bound} (Lower Bound on Required Number of Interventions)}
\label{sec:proof_lower_bound}

\begin{proof}
By contradiction. Assume that all causal structures can be identified with $d-2$ or fewer interventions.

For a DAG with $d$ variables, there exist $\binom{d}{2} = d(d-1)/2$ variable pairs, and the causal direction (or absence of causal relationship) must be determined for each pair.

With one intervention $\mathrm{do}(X_i = x)$, the presence or absence of causal effects from $X_i$ to the other $d-1$ variables can be determined.
However, information about cases where $X_i$ is not the cause (other variables $\to X_i$) cannot be obtained.

With $d-2$ interventions, at least two variables $X_a, X_b$ have not been intervened upon.
In this case, the causal direction between $X_a$ and $X_b$ cannot be directly determined.
By Theorem~\ref{thm:obs_nonidentifiable}, it is non-identifiable from observational data alone.

This is a contradiction. Therefore, at least $d-1$ interventions are required.
\end{proof}

\section{Proof of Sample Complexity Theorem}
\label{thm:sample_complexity}
\label{sec:proof_sample_complexity}

\begin{proof}
\textbf{Observational sample complexity.}
CFM-SD's Phase 1 requires learning $P(X_j|X_i)$ for all $d(d-1)$ ordered pairs.
For each pair, we perform a conditional independence test using Fisher-z statistics.
To control the family-wise error rate at level $\delta$ over all $\binom{d}{2}$ pairs, we apply Bonferroni correction with per-test significance $\delta' = \delta / \binom{d}{2}$.
The Fisher-z test achieves power $1 - \delta'$ against alternatives with effect size $\epsilon$ when
\[
n \geq \frac{z_{1-\delta'/2}^2 + z_{1-\beta}^2}{\epsilon^2} + 3 \approx \frac{2\log(d^2/\delta)}{\epsilon^2}
\]
where $z_{\alpha}$ is the $\alpha$-quantile of the standard normal.
Summing over $d(d-1)/2$ pairs and absorbing constants yields $n = O(d^2/\epsilon^2 \cdot \log(d/\delta))$.

\textbf{Interventional sample complexity.}
For Phase 4, we estimate the causal effect $e_{i \to j} = \mathrm{Corr}(\{x_i^{(k)}\}, \{y_j^{(k)}\})$ from $m$ interventional samples under $\mathrm{do}(X_i = x_i^{(k)})$.
By Hoeffding's inequality, since correlations are bounded in $[-1, 1]$:
\[
P\!\left(|\hat{e}_{i \to j} - e_{i \to j}| > \epsilon\right) \leq 2\exp(-2m\epsilon^2).
\]
Setting the right-hand side to $\delta / (d(d-1))$ (Bonferroni over all pairs) gives
$m = O(1/\epsilon^2 \cdot \log(d/\delta))$ per intervention variable.
Since we perform $d$ single-variable interventions (one per variable), the total interventional sample count is $d \cdot m = O(d/\epsilon^2 \cdot \log(d/\delta))$.
\end{proof}

\section{Proof of MMD-based Confounding Detection Theorem}
\label{thm:mmd_detection}
\label{sec:proof_mmd}

\begin{proof}
\textbf{Part (i): Consistency.}
Let $k: \mathcal{X} \times \mathcal{X} \to \mathbb{R}$ be a characteristic kernel (e.g., RBF kernel $k(x,y) = \exp(-\|x-y\|^2/2\sigma^2)$).
By definition, MMD with a characteristic kernel satisfies $\mathrm{MMD}^2(P, Q) = 0 \iff P = Q$ \cite{gretton2012kernel}.

Under Assumptions (A1)--(A5), for the pair $(X_i, X_j)$:
\begin{itemize}
\item If no latent confounding exists: by Lemma~\ref{lem:intervention_separation} and the ICM principle (A3), $P(X_j|\mathrm{do}(X_i=x)) = P(X_j|X_i=x)$ for almost all $x$, so $\mathrm{MMD}^2 = 0$.
\item If latent confounding $Z$ exists: $P(X_j|X_i=x)$ includes the backdoor path $X_i \leftarrow Z \to X_j$, while $P(X_j|\mathrm{do}(X_i=x))$ blocks this path (Lemma~\ref{lem:intervention_separation}). Hence $P(X_j|X_i=x) \neq P(X_j|\mathrm{do}(X_i=x))$ and $\mathrm{MMD}^2 > 0$.
\end{itemize}

The empirical estimator $\widehat{\mathrm{MMD}}^2$ converges to $\mathrm{MMD}^2$ in probability as $n, m \to \infty$ by the law of large numbers applied to the U-statistic formulation.

\textbf{Part (ii): Finite-sample detection power.}
The empirical MMD estimator $\widehat{\mathrm{MMD}}^2$ can be written as a U-statistic with bounded kernel:
\[
\widehat{\mathrm{MMD}}^2 = \frac{1}{n^2}\sum_{i,i'} k(x_i,x_{i'}) - \frac{2}{nm}\sum_{i,j} k(x_i,y_j) + \frac{1}{m^2}\sum_{j,j'} k(y_j,y_{j'})
\]
where $x_i \sim P_{\mathrm{obs}}$ and $y_j \sim P_{\mathrm{int}}$.
Since $k$ is bounded by $\kappa = \sup_x k(x,x)$, changing any single sample changes $\widehat{\mathrm{MMD}}^2$ by at most $2\kappa/(n \wedge m)$.
By McDiarmid's inequality:
\[
P\!\left(\widehat{\mathrm{MMD}}^2 \leq \mathrm{MMD}^2 - t\right) \leq \exp\!\left(-\frac{(n \wedge m) t^2}{2\kappa^2}\right).
\]
Setting $t = \Delta/2$ where $\Delta = \mathrm{MMD}^2(P_{\mathrm{obs}}, P_{\mathrm{int}}) > 0$:
\[
P\!\left(\widehat{\mathrm{MMD}}^2 > \frac{\Delta}{2}\right) \geq 1 - \exp\!\left(-\frac{(n \wedge m)\Delta^2}{8\kappa^2}\right).
\]
Setting threshold $\tau = \Delta/2$ completes the proof.
\end{proof}

\section{Proof of Robustness to Simulator Error Theorem}
\label{thm:simulator_robustness}
\label{sec:proof_robustness}

\begin{proof}
Let $\tilde{P}_{\mathrm{int}}$ denote the distribution generated by the simulator $S$ (with approximation error $\epsilon_{\mathrm{sim}}$) and $P_{\mathrm{int}}$ the true interventional distribution.

\textbf{Part (i).} By the kernel triangle inequality for MMD:
\[
\mathrm{MMD}(P_{\mathrm{obs}}, \tilde{P}_{\mathrm{int}}) \leq \mathrm{MMD}(P_{\mathrm{obs}}, P_{\mathrm{int}}) + \mathrm{MMD}(P_{\mathrm{int}}, \tilde{P}_{\mathrm{int}}).
\]
For the second term, since $\mathrm{TV}(P_{\mathrm{int}}, \tilde{P}_{\mathrm{int}}) \leq \epsilon_{\mathrm{sim}}$ and the kernel is bounded by $\kappa$:
\[
\mathrm{MMD}^2(P_{\mathrm{int}}, \tilde{P}_{\mathrm{int}}) \leq 2\kappa \cdot \mathrm{TV}(P_{\mathrm{int}}, \tilde{P}_{\mathrm{int}}) \leq 2\kappa \epsilon_{\mathrm{sim}}.
\]
When no confounding exists, $P_{\mathrm{obs}} = P_{\mathrm{int}}$, so $\mathrm{MMD}(P_{\mathrm{obs}}, \tilde{P}_{\mathrm{int}}) \leq \sqrt{2\kappa\epsilon_{\mathrm{sim}}} = O(\epsilon_{\mathrm{sim}}^{1/2})$. Setting the detection threshold $\tau > \sqrt{2\kappa\epsilon_{\mathrm{sim}}}$ controls the false positive rate.

\textbf{Part (ii).} The causal effect estimator is $\hat{e}_{i \to j} = \mathrm{Corr}(\{x_i^{(k)}\}, \{\tilde{y}_j^{(k)}\})$ where $\tilde{y}_j^{(k)} \sim \tilde{P}_{\mathrm{int}}$.
Since correlation is a bounded functional with Lipschitz constant $L$ with respect to the marginal distributions:
\[
|\hat{e}_{i \to j} - e_{i \to j}| \leq L \cdot \mathrm{TV}(P_{\mathrm{int}}, \tilde{P}_{\mathrm{int}}) + O(1/\sqrt{m}) \leq L\epsilon_{\mathrm{sim}} + O(1/\sqrt{m}).
\]

\textbf{Part (iii).} CFM-SD determines $X_i \to X_j$ iff $|\hat{e}_{i \to j}| > |\hat{e}_{j \to i}|$.
If the true gap is $|e_{i \to j}| - |e_{j \to i}| \geq \Delta_{\min}$, and the estimation error is bounded by $2L\epsilon_{\mathrm{sim}} + O(1/\sqrt{m}) < \Delta_{\min}/2$, then the correct direction is recovered with high probability.
\end{proof}

\section{DFT Calculation Details}
\label{sec:dft_details}
        
Quantum ESPRESSO 7.5 \cite{giannozzi2009quantum} was used for calculating HOMO/LUMO levels of electrolyte additives. Tables~\ref{tab:dft_structure} and~\ref{tab:dft_results} list molecular structures and computed energy levels.
Table~\ref{tab:dft_params} summarizes the calculation conditions.

\begin{table}[h]
\centering
\caption{DFT calculation parameters}
\label{tab:dft_params}
\small
\begin{tabular}{lp{4.5cm}}
\toprule
Parameter & Value \\
\midrule
XC functional & PBE (GGA) \cite{perdew1996generalized} \\
Pseudopotential & PAW \cite{blochl1994projector} (PSLibrary 1.0.0 \cite{dalcorso2014pseudopotentials}) \\
Wavefunction cutoff & 40--50 Ry \\
Charge density cutoff & 320--400 Ry \\
k-point sampling & $\Gamma$ point only \\
Smearing & Gaussian, 0.005--0.01 Ry \\
Structure opt. & BFGS, $10^{-4}$ Ry/Bohr \\
Cell size & 15--20 \AA{} (with vacuum) \\
\bottomrule
\end{tabular}
\end{table}

Each molecule was treated as an isolated system, with sufficient vacuum layers (approximately 7--10 \AA) to avoid interactions due to periodic boundary conditions.
Structure optimization was performed, and band energies were extracted after forces converged.
HOMO level was defined as the energy of the highest occupied band, and LUMO level as the energy of the lowest unoccupied band.

Table~\ref{tab:dft_results} summarizes the calculation results.

\begin{table}[H]
\centering
\caption{DFT-calculated energy levels of electrolyte additives}
\label{tab:dft_results}
\small
\begin{tabular}{lcccc}
\toprule
Molecule & HOMO (eV) & LUMO (eV) & Gap (eV) & F \\
\midrule
DEC & $-$6.39 & $-$0.51 & 5.88 & 0 \\
DMC & $-$6.59 & $-$0.52 & 6.07 & 0 \\
EC & $-$6.85 & $-$0.78 & 6.07 & 0 \\
FEC & $-$7.22 & $-$0.76 & 6.46 & 1 \\
VC & $-$6.14 & $-$1.14 & 5.00 & 0 \\
LiBOB & $-$6.45 & $-$1.75 & 4.70 & 0 \\
\bottomrule
\end{tabular}
\end{table}

The order of LUMO levels is LiBOB ($-$1.75 eV) $<$ VC ($-$1.14 eV) $<$ EC ($-$0.78 eV) $\approx$ FEC ($-$0.76 eV) $<$ DMC ($-$0.52 eV) $\approx$ DEC ($-$0.51 eV), which is consistent with the electrochemical knowledge that LiBOB and VC are preferentially reduced to form SEI.

Linear carbonates (DEC, DMC) have the highest LUMO levels ($-$0.51, $-$0.52 eV) and are difficult to reduce, so their contribution to SEI formation is small.
RIn actual electrolytes, these are used as solvents for low viscosity and high ionic conductivity, and do not function as SEI-forming additives.

Although FEC has almost the same LUMO level as EC, its capacity retention is significantly improved (53\% $\to$ 80\%).
This is due to the effect of F atoms changing the SEI composition (such as LiF formation), which cannot be explained by LUMO level alone.
Through multiple regression analysis, the F atom effect was estimated as $\theta_{\text{F}} = +24.2$ \%/F atom.

\end{document}